\DeclareMathAlphabet{\mathmybb}{U}{bbold}{m}{n}
\newcommand{\subalign}[1]{%
  \vcenter{%
    \Let@ \restore@math@cr \default@tag
    \baselineskip\fontdimen10 \scriptfont\tw@
    \advance\baselineskip\fontdimen12 \scriptfont\tw@
    \lineskip\thr@@\fontdimen8 \scriptfont\thr@@
    \lineskiplimit\lineskip
    \ialign{\hfil$\m@th\scriptstyle##$&$\m@th\scriptstyle{}##$\hfil\crcr
      #1\crcr
    }%
  }%
}
\newcommand{\norm}[1]{\left\lVert #1 \right\rVert}
\newtheorem{theorem}{Theorem}
\newtheorem{lemma}{Lemma}
\newtheorem{assumptions}{Assumptions}
\theoremstyle{remark}
\newtheorem*{remark}{Remark}
\title{Closing the ODE-SDE gap in score-based diffusion models through the Fokker--Planck equation}
\author{Teo Deveney$^{\dagger}$, Jan Stanczuk$^{\ddagger}$, Lisa Maria Kreusser$^{\dagger}$,\\     Chris~Budd$^{\dagger}$ and Carola-Bibiane Schönlieb$^{\ddagger}$}
\begin{document}

\date{}
\maketitle
\thispagestyle{alim}

 \begin{abstract}
     Score-based diffusion models have emerged as one of the most promising frameworks for deep generative modelling, due to their state-of-the art performance in many generation tasks while relying on mathematical foundations such as stochastic differential equations (SDEs) and ordinary differential equations (ODEs). Empirically, it has been reported that ODE based samples are inferior to SDE based samples. In this paper we rigorously describe the range of dynamics and approximations that arise when training score-based diffusion models, including the true SDE dynamics, the neural approximations, the various approximate particle dynamics that result, as well as their associated Fokker--Planck equations and the neural network approximations of these Fokker--Planck equations. We systematically analyse the difference between the ODE and SDE dynamics of score-based diffusion models, and link it to an associated Fokker--Planck equation. We derive a theoretical upper bound on the Wasserstein 2-distance between the ODE- and SDE-induced distributions in terms of a Fokker--Planck residual. We also show numerically that conventional score-based diffusion models can exhibit   significant differences between ODE- and SDE-induced distributions which we demonstrate using explicit comparisons. Moreover, we show numerically that reducing the Fokker--Planck residual by adding it as an additional regularisation term leads to closing the gap between ODE- and SDE-induced distributions. Our experiments suggest that this regularisation can improve the distribution generated by the ODE, however that this can come at the cost of degraded SDE sample quality.
 \end{abstract}

\section{Introduction}

Score-based \cite{score_matching} and diffusion-based  \cite{sohldickstein2015diffusion_original} generative models have recently been revived and improved, in \cite{song2020generative_score} and \cite{ho2020denoising}. In \cite{song2021sde}, both frameworks have been unified into a single continuous-time approach based on stochastic differential equations and called score-based diffusion models. 
These approaches have received a lot of attention, achieving state-of-the-art performance in likelihood estimation \cite{song2021sde} and unconditional image generation \cite{dhariwal2021diffusion_beats_gans}. Recently another wave of interest has been sparked by publication of two state-of-the-art text-to-image generation models: Stable Diffusion \cite{rombach2022stable_diffusion} and DALL·E \cite{ramesh2022dalle}.

In addition to achieving impressive performance in both image generation and likelihood estimation, score-based diffusion models do not suffer from training instabilities or mode collapse common in other approaches to deep generative modelling \cite{dhariwal2021diffusion_beats_gans, song2021sde}. Moreover, their time complexity in high-resolutions is much better than that of auto-regressive models \cite{dhariwal2021diffusion_beats_gans}. This makes score-based diffusion  models very attractive avenue for the future of deep generative modelling.

Score-based diffusion models convert data into noise through a diffusion process governed by a stochastic differential equation (SDE). Generating new data points is achieved by sampling noise particles and simulating a reverse-time dynamics of this diffusion process, driven by an equation known as the reverse SDE. The reverse SDE has a closed-form expression which depends solely on the time-dependent gradient field (the so-called score) of the logarithm of the perturbed data distribution.  

In addition to the aforementioned stochastic dynamics, diffusion models can   facilitate deterministic dynamics, which are controlled by a probability flow ordinary differential equation (ODE). This ODE framework offers a deterministic method for sampling from a diffusion model and plays a pivotal role in the computation of likelihoods. Using the Fréchet inception distance (FID) score which is a metric used to assess the quality of images created by generative models, the authors in \cite{song2021sde} report the FID scores on image generation tasks using different sampling methods. They demonstrate that the FID scores of the ODE-based sampler are lower than those of SDE-based sampler, implying that the   ODE-based sampler has inferior performance compared to the stochastic counterpart \cite{song2021sde}. However, no explicit comparisons of ODE and SDE samples is provided. 
This raises questions about the validity of the likelihood computations and the theoretical reasons for the discrepancy between SDE- and ODE-induced distributions. 

In this paper, we will analyse the theoretical underpinnings of score-based diffusion models and show that the discrepancy can be explained through a mean-field perspective on diffusion models.

\textbf{Our contributions}:  We rigorously describe the range of dynamics and approximations that arise when training
score-based diffusion model, including the forward SDE dynamics, the neural approximations, the various approximate particle dynamics, as well as the associated mean-field equations and the neural network approximations of the mean-field equations.
 We systematically analyse the difference between ODE and SDE dynamics of score-based diffusion models and link it to an associated Fokker--Planck equation. We derive a theoretical upper bound on the Wasserstein 2-distance between the ODE- and SDE-induced distributions in terms of a Fokker--Planck residual. We show numerically that conventional score-based diffusion models can exhibit   significant differences between SDE- and ODE-induced distributions which we demonstrate using explicit comparisons. Moreover, we show numerically that training score-based diffusion models with an additional Fokker--Planck regularisation term leads to closing the gap between   SDE and   ODE distributions. In our experiments we show that this can improve the distribution generated by the ODE sampler, though it can have a negative impact on SDE samples.

\subsection{Related work}

The deterministic ODE dynamics for score-based diffusion models were introduced in \cite{song2021sde}, where the authors show that under a perfect score approximation the SDE and ODE distributions coincide and derive a method for computing the likelihoods based on the ODE formulation. In the same work, the authors report that empirically under imperfect score approximation the ODE sampler exhibits inferior performance. This empirical finding highlights the necessity for a more rigorous theoretical investigation into this phenomenon.

In \cite{song2021maximum}, the authors derive an upper-bond on the  Kullback–Leibler divergence between the SDE-induced model distribution and the target data-generating distribution in terms of the score-matching objective that is minimised to train score-based diffusion models. However, they also point out that the same bound does not hold for the ODE-induced distribution.

This issue is further explored in \cite{lu2022maximumODE}, where the authors introduce a new equality which can be used for bounds of the Kullback–Leibler divergence between the ODE-induced distribution and the data-generating distribution. Their findings reveal that the conventional score matching objective, typically employed in score-based diffusion models, fails to adequately control the error in the ODE distribution. To address this, the authors propose a novel training scheme that optimises an upper bound on the Kullback–Leibler divergence based on higher orders of the score-matching error. 

The authors in  \cite{lai2023fpdiffusion} relate the Kullback–Leibler divergence between the ODE-induced distribution and the data-generating distribution to the error in the Fokker--Planck equation associated with the diffusion process. The authors derive the PDE obeyed by the score of the forward diffusion process and call this the score-Fokker--Planck equation. By further bounding the upper bound derived in \cite{lu2022maximumODE}, they demonstrate that the residual of the score-Fokker--Planck equation can control the ODE sample error up to some additive constant, and therefore minimising the score-Fokker--Planck residual reduces an upper bound on the log-likelihood of the probability flow ODE. In principle their results can be used to quantify the discrepancy between ODE- and SDE-induced distributions by relating each to the data generating distribution. However, additional convergence of the score matching objective to zero is required to establish convergence between these distributions. Furthermore, they develop a numerical regularisation scheme for score-based diffusion models that facilitates the minimisation of the score-Fokker--Planck residual.

Our work also considers the Fokker--Planck equation underlying the diffusion dynamics, and therefore shares some similarities with that of \cite{lai2023fpdiffusion}. However, our analysis has been conducted independently using a different theoretical toolbox, and accordingly reveals different insights. First and foremost, our focus is on quantifying the discrepancy between the ODE and SDE distributions, rather than the distance between the ODE  and the data-generating distribution. We postulate that this is a more relevant quantity to examine since the probability flow ODE is derived through a reformulation of the underlying Fokker--Planck equation. We derive bounds on the ODE-SDE discrepancy in terms of a Fokker--Planck residual that do not contain any additive constants, nor do they rely on the value of the score-matching objective. This is significant, as our numerical experiments suggest that in practice that trying to enforce agreement with the underlying Fokker--Planck equation adversely affects the score matching objective, which makes the concurrent minimisation of these terms unrealistic for a fixed network architecture. Secondly, our analysis is done in terms of Wasserstein 2-distance rather than  Kullback–Leibler divergence, and as such is a distinct mathematical approach. We postulate that in this context the Wasserstein distance is a more desirable quantity than  Kullback–Leibler divergence, because the results remain meaningful even for mutually singular distributions. Such scenario could arise if the data distribution is supported on a low dimensional sub-manifold and SDE and ODE distributions approximate different manifolds which do not align perfectly. For a detailed discussion on the issues of the   Kullback–Leibler divergence in relation to   distributions supported on sub-manifolds, we   refer to \cite{arjovsky2017principled} and \cite{arjovsky2017wasserstein}. Thirdly, our analysis is based on a Fokker--Planck equation formulated for the log-density function as opposed to the score function, and we refer to this log-density as a potential. Consequently, in our numerical experiments, we employ the potential parameterisation, rather than the conventional score-parameterisation of the neural network. The choice of the potential parameterisation does not only allow  us to introduce a regularisation term that minimises the residual of the log-density Fokker--Planck equation, but also ensures that the resulting score approximation is a conservative vector field. This is a desirable property, since the ground truth solution is a gradient field. However, as demonstrated in our numerical analysis, the conventional parameterisation often results in score approximations with a non-zero curl. The potential parameterisation and conservation property of the vector fields obtained by score parameterisation has also been  explored in \cite{salimans2021potential}. While the potential parameterisation is the focus of our work, we remark that it is simple to adapt our theory to the score parameterisation setting and attain analogous bounds in terms of a score-Fokker--Planck residual similar to the one considered  in \cite{lai2023fpdiffusion}. A sketch of this reasoning is also provided.

\subsection{Outline}
In Section \ref{sec:diffusionmodels}, we describe the broad range of dynamics and approximations that arise when training a score-based diffusion model. Our main theoretical result on the ODE-SDE gap in score-based diffusion models is proven in Section \ref{sec:mainresult} where we derive an upper bound on the Wasserstein 2-distance between the ODE- and SDE-induced distributions in terms of a Fokker--Planck residual. In Section \ref{sec:numerics} we provide numerical evidence showing explicitly that conventional score-based diffusion models can exhibit  significant differences between SDE- and ODE-induced distributions. Moreover we show here that reducing the Fokker--Planck residual by adding it as an additional regularisation term indeed leads to closing the gap between SDE and ODE distributions.

\section{Score-based diffusion models}\label{sec:diffusionmodels}

\subsection{Assumptions and notation} \label{sec:assumptions}
 We will work in the time domain $t\in [0,T]$ for some $T>0$ and spatial domain $x \in \Omega \subset \mathbb{R}^d$. For two vectors $a,b\in \mathbb{R}^d$ we denote their inner product $a \cdot b:=a^Tb $, with associated norm $\|a\|_2 := (a\cdot a)^{1/2}$. For a function $h: \Omega \to \mathbb{R}^n$ we denote the $L^2$-norm over some domain $\Omega\subset \mathbb R^d$ as $\|h\|_{L^2(\Omega)} := \left(\int_\Omega \|h(x)\|_2^2 dx\right)^{1/2}$.    We denote by $\nabla$ the (spatial) gradient and by $\nabla^2$ the diffusion. For two probability measures $p,q$ on $\Omega$, we denote their Wasserstein 2-distance by $W_2(p,q)$. 
 
 Let $(\mathmybb{\Omega},\mathbb{F},\mathbb{P})$ be a probability space, and let $\mathcal{F}_t \subset \mathbb{F}$ be the natural filtration (the increasing family of sub-$\sigma$-algebras containing information at times $[0,t]$). As is convention, we denote by $W_t\in \mathbb R^d$ a Brownian motion  at time $t$ with values in $\mathbb R^d$ adapted to the filtration $\mathcal{F}_t$. Conversely, let $\bar{\mathcal{F}}_t \subset \mathbb{F}$ denote a reverse filtration (the decreasing family of sub-$\sigma$-algebras containing information at times $[t,T]$). We denote by $\bar{W}_t\in \mathbb R^d$ a Brownian motion at time $t$ with values in $\mathbb R^d$ adapted to $\bar{\mathcal{F}}_t$. Under suitable assumptions, stochastic differential equations (SDEs) driven by $W_t$ are adapted to $\mathcal{F}_t$, and SDEs driven by $\bar{W}_t$ are adapted to $\bar{\mathcal{F}}_t$. Throughout we will refer to the former as \emph{forward} SDEs, and the latter as \emph{reverse} SDEs even though both SDEs will initially be formulated using the forward time variable $t$. When dealing with SDEs and their mean-field limit  we will distinguish between evolution equations running forward and backwards in time by introducing the reverse time variable  $\tau= T-t$ to specify that the corresponding dynamics are in reverse time. We will denote the SDE dynamics parameterised with the forward and reverse time variables $t$ and  $\tau= T-t$  by $x_t$ and  $\bar x_\tau$, respectively, with $\bar x_\tau =\bar x_{T-t}=x_t=x_{T-\tau}$. For any function $h\colon \mathbb R^d \times [0,T]\to \mathbb R$, we introduce $\bar h\colon \mathbb R^d \times [0,T]\to \mathbb R$ by $\bar h(\cdot,\tau)=h(\cdot,T-\tau) $ for all $\tau \in [0,T]$. Further, let probability densities $p_0, \pi$ on $\mathbb R^d$ be given and we denote the associated log-densities by $u_0 = \log p_0$, $u_T = \log \pi$ on $\mathbb R^d$.
Throughout the paper, we make the following regularity assumptions:
\begin{assumptions}\label{ass:regularity}
Let $T>0$ and let $f\in C^{\infty}(\mathbb R^d \times [0,T];\mathbb R^d)$ such that $\|f(x,t)\|_2 \leq K_f(1+\|x\|_2)$ for some $K_f>0$. Assume that $g \in C^{\infty}([0,T]; \mathbb R)$ and there is $0<m<M<\infty$ such that $m\leq g(t)\leq M$ for  all $t\in[0,T]$. We assume that $\Omega \subset \mathbb{R}^d$ is a bounded domain with $\partial\Omega \in C^{\infty}$. For neural approximations we assume smooth activation functions, so that neural potential models $u_\theta:\mathbb R^d \times [0,T]\to \mathbb{R}$ are in $C^{\infty}(\mathbb R^d \times [0,T]; \mathbb R)$ throughout, and neural score models $s_\theta:\mathbb R^d \times [0,T]\to \mathbb{R}^d$ are in $C^{\infty}(\mathbb R^d \times [0,T];\mathbb R^d)$. Moreover we assume that there are $K_u,K_s>0$ such that $\|\nabla u_\theta(x,t)\|_2 \leq K_u(1+\|x\|_2)$ and $\|s_\theta(x,t)\|_2 \leq K_s(1+\|x\|_2)$. Finally, we assume that the second moments of $p_0$ and $\pi$ are finite, and that $\pi \in C^{\infty}(\mathbb R^d; \mathbb R)$.
\end{assumptions}

\subsection{Particle dynamics}

We introduce the \emph{forward SDE}  as
\begin{align}\label{eq:forward}
    dx_t = f (x_t,t)dt + g(t)dW_t,
\end{align}
equipped with some initial distribution $p_0$   for $x_0$. In generative modelling settings, this initial distribution $p_0$  represents the underlying target distribution from which the data was sampled. In \eqref{eq:forward}, $W_t$ denotes the value of a Brownian motion adapted to $\mathcal{F}_t$, and therefore $x_t$ is also adapted to $\mathcal{F}_t$. We denote the associated marginal density of samples from \eqref{eq:forward} at time $t$ by $p(\cdot,t)$ with $p_0=p(\cdot,0)$. Note that \eqref{eq:forward} has a unique $t$-continuous solution by Assumptions \ref{ass:regularity}.
 
In \cite{anderson1982reverse_time_sde}, the author shows that the process in \eqref{eq:forward} can be written as an SDE measurable with respect to the reverse filtration $\bar{\mathcal{F}}_t$. We refer to this SDE as the \emph{reverse SDE} and it is given by
 \begin{align}\label{eq:reverse}
      dx_t =  ( f(x_t,t) - g^2(t) \nabla  \log p(x_t,t))dt + g(t) d\bar{W}_t,
 \end{align}
 where $\bar{W}_t$ is a Brownian motion adapted to $\bar{\mathcal{F}}_t$ at time $t$. Intuitively one can think of $\bar{W}_t$ as the backwards evolution of Brownian motion with known terminal state, and \eqref{eq:reverse} as the backwards evolution of \eqref{eq:forward}. Accordingly, if the terminal distribution for $x_T$ is set to $p(\cdot,T)$, then the trajectories of \eqref{eq:reverse} share the same distribution as \eqref{eq:forward} for any time $t\in [0,T]$.
As shown in \cite{song2021sde}, a reformulation of the Fokker--Planck equations allows us to derive the \emph{probability flow ODE} of the forward SDE \eqref{eq:forward}. This is given by
\begin{align}\label{eq:probflow}
    \frac{dx_t}{dt} =   f(x_t,t) - \frac{1}{2} g^2(t)  \nabla \log p(x_t,t),
\end{align}
equipped with initial distribution $p_0$ for $x_0$ or, equivalently, terminal distribution $p(\cdot,T)$ for $x_T$. The trajectories initialised from $p_0$  evolve forward in time according to \eqref{eq:probflow} and also have marginal distribution  $p(\cdot,t)$ at time $t$. Similarly, the trajectories with terminal condition $x_T$ sampled from $p(\cdot,T)$ have marginal distribution  $p(\cdot,t)$ at time $t$.
Therefore we have that the associated densities to \eqref{eq:forward}, \eqref{eq:reverse} and \eqref{eq:probflow} are all given by $p(\cdot,t)$ at any time $t\in[0,T]$.

\subsection{Neural approximation}

For generative tasks, practitioners  assume $p(\cdot,T)$ to be equal to a given \emph{prior distribution} $\pi$ and simulate equations \eqref{eq:reverse} or \eqref{eq:probflow} to generate samples from $p_0$.  Typically, $\pi$ approximates $p(\cdot,T)$ and is an easy to sample from distribution that contains no information of $p_0$, such as a Gaussian distribution with fixed mean and variance.  However, solving  equations \eqref{eq:reverse} or \eqref{eq:probflow}  requires knowledge of the \emph{(Stein) score function} $\nabla \log p(x_t,t)\in \mathbb R^d$ for any $x_t$, which is not known in general and must be approximated from data. Therefore a neural network $s_\theta(x_t,t)\in \mathbb R^d$ with model parameters $\theta$ is trained to approximate the score function from the data by minimising the weighted score matching objective
\begin{gather}
\label{SM}
\begin{aligned}
    \mathcal{L}_{SM}(\theta, s_\theta,\lambda) := 
     \mathbb{E}_{\subalign{&t \sim U(0,T)\\ &x_t \sim p(x_t, t)}} [\lambda(t) \norm{\nabla{\log{p(x_t, t)}} - s_\theta(x_t,t)}_2^2]
\end{aligned}
\end{gather}
where $\lambda: [0,T] \xrightarrow{} \mathbb{R}_+$ is a positive weighting function.

$\mathcal{L}_{SM}$ in \eqref{SM} cannot be minimised directly since we do not have access to the ground truth score $\nabla{\log{p(x_t, t)}}$. Therefore, in practice, a different objective has to be used \cite{score_matching, vincent2011connection, song2021sde}. In \cite{song2021sde}, the weighted denoising score-matching objective is considered, which is defined as 
\begin{gather}\label{DSM}
\begin{aligned}
    \mathcal{L}_{DSM}(\theta, s_\theta, \lambda) := 
     \mathbb{E}_{\subalign{&t \sim U(0,T)\\ &x_0 \sim p_0(x_0) \\ &x_t \sim p(x_t, t | x_0, 0)}} [\lambda(t) \norm{\nabla{\log{p(x_t, t | x_0, 0)}} - s_\theta(x_t,t)}_2^2].
\end{aligned}
\end{gather}
The difference between \eqref{SM} and \eqref{DSM} is the replacement of the unknown ground truth score $\nabla{\log{p(x_t, t)}}$  by the score of the perturbation kernel $ \nabla \log p(x_t, t | x_0, 0) $ which  can be determined analytically for many choices of forward SDEs. Note that for a fixed function $\lambda$, objective \eqref{DSM} is equal to   objective \eqref{SM} up to an additive constant, which does not depend on the  model parameters  $\theta$. The reader can refer to \cite{vincent2011connection} for the proof. 

The choice of the weighting function $\lambda$ is   important  because it determines the quality of score-matching in different diffusion scales. A principled choice for the weighting function is $\lambda(t) = g(t)^2$. This weighting function is called the likelihood weighting function.

\begin{remark}
  The choice  $\lambda(t) = g^2(t)$  ensures that \eqref{DSM} together with the Kullback–
  Leibler divergence $D_{KL}$ from the true terminal distribution $p(\cdot,T)$ to the given prior distribution $\pi$ yields an 
 upper bound on the Kullback–Leibler divergence from the target distribution $p(\cdot,0)$ to the model distribution $p_{\theta}^{SDE}(\cdot,0)$. Here,  $p_{\theta}^{SDE}(\cdot,0)$ refers to the distribution of samples obtained by simulating a neural approximation of particle dynamics \eqref{eq:reverse}, which we introduce in \eqref{eq:reverseapprox}. More precisely, it holds that
\begin{equation*}
\begin{aligned}
    D_{KL}(p(\cdot,0) \parallel p_{\theta}^{SDE}(\cdot,0)) &\leq  D_{KL}(p(\cdot,T)\parallel \pi(\cdot)) + \frac{T}{2} \mathcal{L}_{SM}(\theta, s_\theta, g^2).
\end{aligned}
\end{equation*}
 Other weighting functions  also yielded very good results \cite{kingmaVDM} for particular choices of forward SDEs. However, there are no theoretical guarantees that alternative weightings would yield good results for arbitrary choices of forward SDEs.

Similarly to  the model distribution $p_{\theta}^{SDE}(\cdot,0)$ of the  neural approximation of the particle dynamics \eqref{eq:reverse}, one can also consider the distribution of samples $p_{\theta}^{ODE}(\cdot,0)$ obtained by simulating the neural approximation of particle dynamics \eqref{eq:probflow} and we will  formally introduce $p_{\theta}^{ODE}(\cdot,0)$ in Section \ref{sec:approxparticles}. A bound of the  Kullback–Leibler divergence from the target distribution $p(\cdot,0)$ to the model distribution $p_{\theta}^{ODE}(\cdot,0)$ 
has been derived in \cite{lu2022maximumODE} and is given by
\begin{equation*} 
D_{KL}(p(\cdot, 0) \parallel p_{\theta}^{ODE}(\cdot, 0)) = D_{KL}(p(\cdot, T) \parallel \pi(\cdot)) + \frac{T}{2}\mathcal{L}_{SM}(\theta, s_\theta, g^2)  + \frac{T}{2}\mathcal{L}_{Diff}(\theta, s_\theta, g^2),
\end{equation*}
where
\begin{align*}
&\mathcal{L}_{Diff}(\theta, s_\theta, \lambda) \\&\qquad:=  \mathbb{E}_{\subalign{&t \sim U(0,T)\\ &x_t \sim p(x_t, t)}}  \left[ \lambda(t) \left(\nabla \log p(x,t) - s_\theta(x,t) \right)^T \left( \nabla \log p_{\theta}^{ODE}(x,t) - s_\theta(x,t) \right) \right] dt.
\end{align*}
Upper bounds for $\mathcal{L}_{Diff}(\theta, s_\theta, g^2)$ have been derived in \cite{lu2022maximumODE,lai2023fpdiffusion}, and training schemes based on minimising these upper bounds have been proposed.
\end{remark}

Most implementations of neural score approximations parameterise the time-dependent score vector field directly with a neural network $s_{\theta}: \Omega \times [0, T] \xrightarrow{} \mathbb{R}^d$ on some bounded domain $\Omega$. Our experiments illustrated in Figure \ref{fig:curl} show that such approximation results in a vector field, which is not conservative and therefore cannot be a gradient field of any function. Since we know a priori that the target vector field $\nabla \log p$ is a gradient field, instead of learning $s_\theta$, we consider a neural network $\phi_\theta \in C^\infty(\Omega\times [0,T];\mathbb R)$ such that $\phi_\theta(x,t)$ approximates the log-density $\log p(x,t)$ for any $(x,t)\in \Omega\times [0,T]$ up to some normalising constant. In other words there exists a (time-dependent) normalising constant $Z_t\in \mathbb R$ such that $$p_\theta(x,t) = \exp(\phi_\theta(x,t) - \log Z_t)$$ is a probability distribution. We write $u_\theta = \log{p_\theta}$ for the induced log-density and we call the function $\phi_\theta(x,t)$ a \emph{potential model}. During training the induced approximate score $\nabla \phi_\theta(x,t) = \nabla u_\theta(x,t) \approx \nabla \log p(x, t)$ is computed  by back-propagation through $\phi_\theta$ with respect to the input $x$. This results in a score approximation that is provably a conservative vector field. Moreover, it enables us to calculate the time derivative of the approximate log-density (up to normalisation) as $\partial_t u_\theta(x,t) \approx \partial_t \log p(x,t)$ by back-propagation through $\phi_\theta$ with respect to $t$. This will prove crucial later, when we  introduce and evaluate a log-Fokker--Planck residual for $u_\theta$ in Section \ref{sec:pinns}. 

\begin{figure}[htb]
    \centering
    \captionsetup[subfigure]{labelformat=empty}
    \begin{subfigure}{.45\textwidth}
    \centering
        \includegraphics[width=\textwidth]{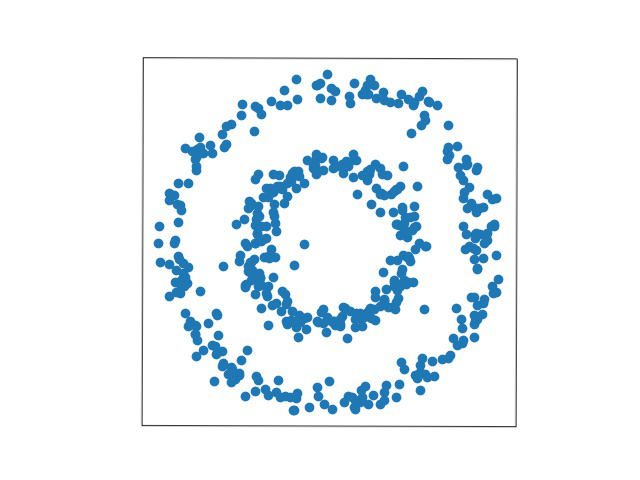}
        \vspace{2mm}
        \caption{(a) Samples from the concentric circles model}
    \end{subfigure} 
    \begin{subfigure}{.45\textwidth}
    \centering
        \includegraphics[width=0.93\textwidth]{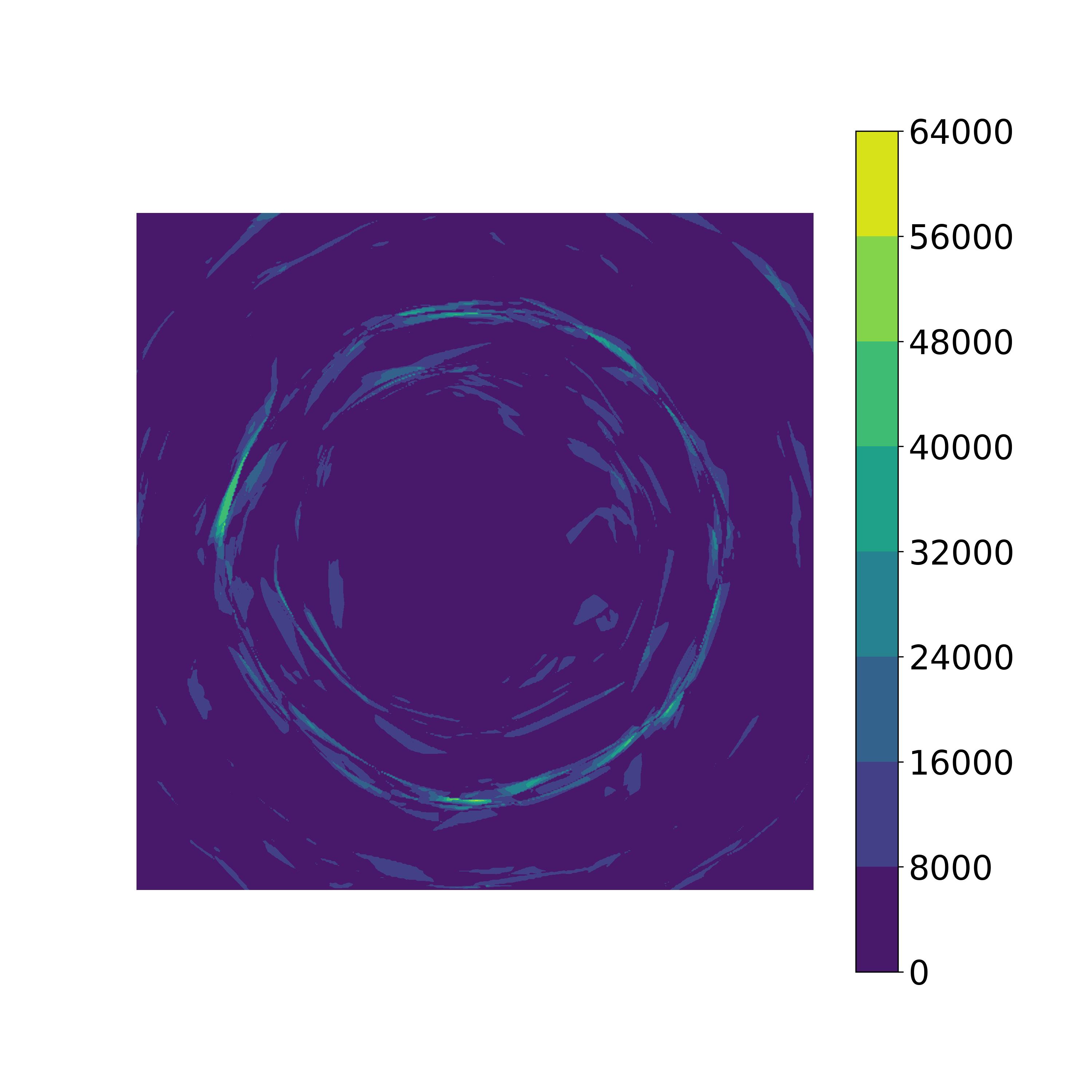}
        \caption{(b) Curl of the concentric circles model}
    \end{subfigure}
    \caption{Samples and Euclidean norm of the curl of a trained (non-potential) score model (i.e. $\|\nabla \times s_\theta(0,x)\|_2$). Clearly, the score model $s_\theta(0,x)$ is not a conservative vector field.}
    \label{fig:curl}
\end{figure}

\subsection{Approximate particle dynamics}\label{sec:approxparticles}

The above neural approximations $u_\theta = \log{p_\theta}\in C^\infty (\Omega \times [0,T];\mathbb R)$  induce  approximate versions of \eqref{eq:reverse} and its deterministic flow \eqref{eq:probflow}. 
For ease of notation, we introduce the  approximated reverse drift, 
\begin{align}\label{eq:approxrevdrift}
    f_\theta^{SDE}(x,t)= f(x,t) - g^2(t) \nabla u_\theta(x,t),    
\end{align} 
obtained by substituting the potential model into the drift of \eqref{eq:reverse}. Note that by the assumed properties of $f,g,u_\theta$ in Assumptions \ref{ass:regularity}, it follows that $f_\theta^{SDE} \in C^{\infty}(\mathbb{R}^d \times [0,T];\mathbb R^d)$ and $\|f_\theta^{SDE}(x,t)\|_2\leq (K_f+M^2K_u)(1+\|x\|_2)$.
Using the approximated reverse drift \eqref{eq:approxrevdrift}, we obtain
the \emph{reverse approximate SDE}
 \begin{align}\label{eq:reverseapprox}
      dx_t =  f^{SDE}_\theta(x_t,t) dt + g(t) d\bar{W}_t,
 \end{align}
 which can be regarded as an approximation of \eqref{eq:reverse}. Here $x_t$ is adapted to the reverse time filtration $\bar{\mathcal{F}}_t$ and by Assumptions \ref{ass:regularity},  \eqref{eq:reverseapprox} has a unique $t$-continuous solution.
We denote the marginal density of $x_t$  satisfying \eqref{eq:reverseapprox} by $p_{\theta}^{SDE}(\cdot,t)$ at time $t$, and equip it with some terminal distribution $\pi$ of $x_T$ at time $T$, i.e., $p_{\theta}^{SDE}(\cdot,T)=\pi$, where $\pi$ is chosen to be a Gaussian approximation of $p(\cdot,T)$. Thus the reverse flow of probability $p_{\theta}^{SDE}$ induced by \eqref{eq:reverseapprox} may be close to $p$ depending on the accuracy of the potential model. 
Applying the result of \cite{anderson1982reverse_time_sde} to write \eqref{eq:reverseapprox} as a process measurable with respect to $\mathcal{F}_t$, we arrive at the \emph{forward approximate SDE}, given by
\begin{align}\label{eq:forwardapprox}
    dx_t =\left[ 
     f^{SDE}_\theta(x_t,t)
    +g^2(t)\nabla \log p^{SDE}_\theta(x_t,t)\right]dt + g(t)dW_t,
\end{align}
where $x_0$ is drawn from $p_{\theta}^{SDE}(\cdot,0)$. The associated probability flow ODE of the approximate SDE (in forward time) is  
\begin{align}\label{eq:probflowapprox}
    \frac{dx_t}{dt} = 
     f^{SDE}_\theta(x_t,t)
    +\frac{1}{2}g^2(t)\nabla \log p^{SDE}_\theta(x_t,t),
\end{align}
where $x_0$ is drawn from $p_{\theta}^{SDE}(\cdot,0)$. Note that the associated densities to \eqref{eq:reverseapprox}, \eqref{eq:forwardapprox} and \eqref{eq:probflowapprox} are all given by $p_{\theta}^{SDE}(\cdot,t)$ for $t\in[0,T]$.
 
Finally, we introduce an approximation of the probability flow ODE \eqref{eq:probflow}  by approximating $\log p$ in \eqref{eq:probflow} by a neural network $u_\theta$. This yields the \emph{approximate probability flow ODE} (in forward time), given by 
\begin{align*}
    \frac{dx_t}{dt} &=f(x_t,t) - \frac{1}{2}g^2(t)\nabla u_\theta(x_t,t)),
 \end{align*}
 or alternatively, 
\begin{align}\label{eq:probflowapprox2}
    \frac{dx_t}{dt} = f^{ODE}_\theta(x_t,t),
\end{align}
 using the approximate forward drift
\begin{align}\label{eq:approxdrift}
    f^{ODE}_\theta(x_t,t)=f(x_t,t) - \frac{1}{2}g^2(t)\nabla u_\theta(x_t,t),
\end{align}
where $f_\theta^{ODE} \in C^{\infty}(\mathbb{R}^d\times [0,T];\mathbb R^d)$. Here  $x_T$ distributed according to $\pi$ the associated density is denoted by $p^{ODE}_\theta(\cdot,t)$ for  $t\in [0,T]$. 
 
In summary, the original formulations \eqref{eq:forward}, \eqref{eq:reverse} and \eqref{eq:probflow}  all have density $p$, the approximations \eqref{eq:reverseapprox}, \eqref{eq:forwardapprox} and \eqref{eq:probflowapprox} obtained by approximating the reverse SDE \eqref{eq:reverse}  all have density $p_{\theta}^{SDE}$ and the approximation of the  probability flow ODE \eqref{eq:probflow} has density $p^{ODE}_\theta$. Moreover, there is a density $p_\theta =\exp(u_\theta)$ implied directly by the neural approximation to log-density. In general, we have that $p \neq p_{\theta} \neq p_{\theta}^{SDE} \neq p_{\theta}^{ODE}$.
 
For the majority of our calculations and numerics it is more convenient to work with logarithms of densities rather than the densities themselves. For each density $p$, we denote the associated log-density by $u$ and refer to $u$ as log-density or potential. That is $u(x,t) = \log p(x,t)$,  $u_{\theta}^{SDE}(x,t) = \log p_{\theta}^{SDE}(x,t)$, $u_{\theta}^{ODE}(x,t) = \log p_{\theta}^{ODE}(x,t)$, and $u_\theta(x,t) = \log p_\theta(x,t)$ for all $(x,t)\in\Omega\times [0,T]$.
 
In addition to considering the dynamics in forward time, we can also introduce the dynamics in reverse time. We denote the reverse time dynamics by  $\bar{x}_\tau$ for $\tau \in [0,T]$  satisfying $\bar x_\tau = x_{T-\tau}$  which implies that  $\bar x_T = x_{0}$ and  $\bar x_0 = x_{T}$ for the initial and terminal conditions. 

As we have a terminal condition $x_T$ for \eqref{eq:reverseapprox} and \eqref{eq:reverseapprox} is stated in forward time, 
the corresponding  parameterisation in reverse time can be useful for obtaining samples satisfying \eqref{eq:reverseapprox}. It is given by   
\begin{gather} \label{eq:inverted_approx_reversesde}
   d\bar{x}_\tau = -\bar f^{SDE}_\theta(\bar{x}_\tau, \tau) dt - \bar g(\tau)dW_\tau,
\end{gather}
where we use the notation from Section \ref{sec:assumptions} and the reverse time variable $\tau = T-t$.
We equip \eqref{eq:inverted_approx_reversesde} with initial condition $\bar x_0$ which is sampled from $\pi$ and
we denote the  distribution of $\bar x_\tau$ at time $\tau$ by $\bar{p}_\theta^{SDE}(\cdot, \tau)$. Note that $\bar{p}_\theta^{SDE}(\cdot, \tau)= p_\theta^{SDE}(\cdot, T-\tau)$ for $\tau \in [0,T]$ with  $\bar{p}_\theta^{SDE}(\cdot, 0) =\pi$. 
This implies that we can sample a particle from the target distribution $p_\theta^{SDE}(\cdot, 0)$ by  sampling $\bar{x}_0$ from $ \pi$ and solving    \eqref{eq:inverted_approx_reversesde} until time $\tau = T$, for instance with the Euler-Maruyama scheme.

Similarly the ODE dynamics  \eqref{eq:probflowapprox2} can be written using the reverse time variable $\tau$ as
\begin{gather}
    \label{eq:inverted_approx_ode}
   \frac{d\bar{x}_\tau}{d\tau} = -\bar{f}_\theta^{ODE}(\bar x_\tau, \tau). 
\end{gather}
To sample from the approximate target distribution $p_\theta^{ODE}(\cdot, 0)=\bar p_\theta^{ODE}(\cdot, T)$, sample an initial condition $\bar x_0$ from $\bar{p}_\theta^{ODE}(\cdot, 0) =\pi$, and simulate \eqref{eq:inverted_approx_ode} forward in time.

\subsection{Mean-field equations}

The mean-field equations describe the evolution of the densities subject to some initial or terminal condition. For the forward SDE \eqref{eq:forward} the density $p$ obeys the {\em forward Fokker--Planck equation} 
\begin{align} \label{eq:fokkerplackforward}
    \frac{\partial p}{\partial t}(x,t) + \nabla \cdot ({f}(x,t)p(x,t)) - \frac{1}{2}g^2(t)\nabla^2p(x,t)=0
\end{align} 
on $\mathbb R^d \times [0,T]$,
equipped with  the initial data $p_0$ on the full space $\mathbb R^d$. For our analysis, we restrict ourselves to a bounded domain  $\Omega \subset \mathbb{R}^d$  with $\partial \Omega \in C^\infty$.  For considering \eqref{eq:fokkerplackforward} on $\Omega$, we equip \eqref{eq:fokkerplackforward} with positive Dirichlet boundary conditions. Let  $p_B\colon \partial \Omega\times [0,T] \to \mathbb R$ denote a positive function which is equal to $p\colon \mathbb R^d \to \mathbb R$ on $\partial \Omega$. Note that we can assume without loss of generality that $p$ is positive on $\partial \Omega$. This yields the forward Fokker--Planck equation \eqref{eq:fokkerplackforward} on the domain $\Omega$ with initial data $p_0$ restricted to $\Omega$ and Dirichlet boundary conditions $p_B$ on $\partial \Omega\times [0,T]$. In addition, we set $u_B=\log p_B$ on $\partial \Omega \times [0,T]$.

The density $p_{\theta}^{SDE}$ of the approximate SDE \eqref{eq:reverseapprox} also satisfies a forward Fokker--Planck equation, which can be derived by writing the Fokker--Planck equation for the forward dynamics \eqref{eq:forwardapprox} of $p_{\theta}^{SDE}$ with appropriate terminal distribution. This gives the {\em approximate  Fokker--Planck equation} (in forward time) 
\begin{align}\label{eq:fokkerplanckapprox}
    \frac{\partial p_{\theta}^{SDE}}{\partial t}(x,t) + \nabla \cdot (f^{SDE}_{\theta
    }(x,t)p_{\theta}^{SDE}(x,t)) +\frac{1}{2}g^2(t)\nabla^2(p_{\theta}^{SDE}(x,t))=0
\end{align}
on $\mathbb R^d \times [0,T]$,
equipped with  the terminal condition  $\pi$ from Assumptions \ref{ass:regularity} on the full space $\mathbb R^d$, i.e., $p_{\theta}^{SDE}(x,T)=\pi(x)$ for all $x\in\mathbb R^d$, where $\pi$ is typically specified as a Gaussian approximation of $p(\cdot,T)$.
For considering \eqref{eq:fokkerplanckapprox} on a bounded domain $\Omega$, we introduce positive Dirichlet boundary conditions.  Let  $p_B^{SDE}\colon \partial \Omega\times [0,T] \to \mathbb R$ denote a positive function which is equal to  $p_{\theta}^{SDE}\colon \mathbb R^d \to \mathbb R$ on $\partial \Omega$. We obtain the approximate  Fokker--Planck equation \eqref{eq:fokkerplanckapprox} on the domain $\Omega$ with initial data $p_0$ restricted to $\Omega$ and Dirichlet boundary conditions $p_B^{SDE}$ on $\partial \Omega \times [0,T]$. We set $u_B^{SDE}=\log p_B^{SDE}$ on $\partial \Omega \times [0,T]$.

Note that for $p_{\theta}^{SDE}$, we  always assume a fixed terminal condition at time $T$ when considering \eqref{eq:fokkerplanckapprox} in $t$ (or an initial condition when considering the evolution in reverse time $\tau$) as $p_{\theta}^{SDE}$ describes the flow of probability backwards from a Gaussian approximation $\pi$ of $p(x,T)$ to some approximation of $p_0$.
Notice that \eqref{eq:fokkerplackforward} and \eqref{eq:fokkerplanckapprox} are of a similar form, apart from the different signs of the diffusion terms.

In addition to considering Fokker--Planck equations for the densities, one can also introduce log-Fokker--Planck equations for the potential.
For the density $p$ satisfying the forward Fokker--Planck equation \eqref{eq:fokkerplackforward} for the forward SDE \eqref{eq:forward} and the associated  potential $u=\log p$ we introduce the \emph{forward log-Fokker--Planck equation} (in forward time) as
\begin{align}\label{eq:logfokkerplanck}
\begin{split}
  \frac{\partial u}{\partial t}(x,t) + \nabla \cdot f(x,t)  & + \nabla u (x,t) \cdot f(x,t)  -\frac{1}{2}g^2(t)\|\nabla u (x,t)\|_2^2-\frac{1}{2}g^2(t)\nabla^2 u (x,t) = 0
  \end{split}
\end{align}
on $\mathbb R^d \times [0,T]$.
On the domain $\Omega$, we equip \eqref{eq:fokkerplackforward}  with initial data $u_0$ restricted to $\Omega$ and boundary conditions $u_B$ on $\partial \Omega\times [0,T]$.

For $p_{\theta}^{SDE}$ solving  \eqref{eq:fokkerplanckapprox} in forward time,  the log-density $u_{\theta}^{SDE} = \log p_{\theta}^{SDE}$ satisfies the \emph{approximate log-Fokker--Planck equation} (in forward time) given by 
\begin{align}\label{eq:logfokkerplanckapprox}
\begin{split}
  \frac{\partial u_{\theta}^{SDE}}{\partial t}(x,t) + \nabla \cdot f_\theta^{SDE}(x,t)  & + \nabla u_{\theta}^{SDE}(x,t) \cdot f_\theta^{SDE}(x,t) \\&\quad +\frac{1}{2}g^2(t)\|\nabla u_{\theta}^{SDE}(x,t)\|_2^2+\frac{1}{2}g^2(t)\nabla^2 u_{\theta}^{SDE}(x,t) = 0
  \end{split}
\end{align}
on $\mathbb R^d \times [0,T]$.
On the domain $\Omega$, we equip \eqref{eq:logfokkerplanckapprox}  with terminal data $u_T$ restricted to $\Omega$ and boundary conditions $u_B^{SDE}$ on $\partial \Omega\times [0,T]$.

Similarly to the the mean-field equations for the SDE dynamics, we can also consider mean-field equations for the ODE dynamics. The deterministic ODE dynamics  \eqref{eq:probflowapprox2}  can be viewed as a continuous normalising flow and can be leveraged to compute data likelihood \cite{song2021sde}. By applying the change of variables formula to the associated continuity equation we obtain  
\begin{gather*}
    \frac{\partial u_\theta^{ODE}}{\partial t}(x_t,t) = -\nabla \cdot f_\theta^{ODE}(x_t,t)
\end{gather*}
for the desired quantity $ u_{\theta}^{ODE}(x_t, t) = \log p_{\theta}^{ODE}(x_t, t)$. We equip $p^{ODE}_{\theta}$ with the terminal condition $\pi$, implying that the unknown term $\log p^{ODE}_{\theta}(x_T, T)$ is given  by the prior log-likelihood $\log \pi(x_T)$.

\subsection{Physics-informed neural networks for mean-field equations}\label{sec:pinns}

Physics informed neural networks (PINN) are deep learning models that approximate the solution to a PDE with some given boundary and initial   conditions by substituting a neural network into these equations and minimising the differential and boundary operator residuals in some norm. In the case of the  approximate Fokker--Planck equation \eqref{eq:fokkerplanckapprox} and the approximate log-Fokker--Planck equation \eqref{eq:logfokkerplanckapprox}, training a PINN would amount to reducing the associated residual.

In the following, we restrict ourselves to a bounded domain  $\Omega \subset \mathbb{R}^d$  with $\partial \Omega \in C^\infty$. 
For  the approximate Fokker--Planck equation \eqref{eq:fokkerplanckapprox}, we consider a neural network with network parameters $\theta$ which is trained to determine a density $p_\theta$ so that $p_\theta$ satisfies positive Dirichlet boundary conditions $p_B^{SDE}$,  terminal condition $p_\theta(\cdot,T)=\pi$ and minimises some appropriate residual.
Note that for an approximate solution $p_\theta$ of \eqref{eq:fokkerplanckapprox}, we obtain that
\begin{align*}
    \begin{split}
    &\frac{\partial p_\theta}{\partial t}(x,t) + \nabla \cdot (f^{SDE}_{\theta}(x,t)p_\theta(x,t)) + \frac{1}{2}g^2(t)\nabla^2(p_\theta(x,t)) \\
    &= \frac{\partial p_\theta}{\partial t}(x,t) + \nabla \cdot (f(x,t)p_\theta(x,t) - g^2(t)\nabla \log p_\theta(x,t) p_\theta(x,t)) + \frac{1}{2}g^2(t)\nabla^2(p_\theta(x,t)) \\
    &=\frac{\partial p_\theta}{\partial t}(x,t) + \nabla \cdot (f(x,t)p_\theta(x,t)) - g^2(t)\nabla^2 p_\theta(x,t) + \frac{1}{2}g^2(t)\nabla^2(p_\theta(x,t)) \\
    &=\frac{\partial p_\theta}{\partial t}(x,t) + \nabla \cdot (f(x,t)p_\theta(x,t)) - \frac{1}{2}g^2(t)\nabla^2(p_\theta(x,t))
    \end{split}
\end{align*}
on $\Omega \times [0,T]$.
This demonstrates that  the residual for the forward  Fokker--Planck equation \eqref{eq:fokkerplackforward} is equivalent to the residual for the approximate Fokker--Planck equation \eqref{eq:fokkerplanckapprox}, and so reducing the residual for the forward Fokker--Planck equation \eqref{eq:fokkerplackforward}  of the forward SDE \eqref{eq:forward} is equivalent to reducing the residual of the approximate Fokker--Planck equation \eqref{eq:fokkerplanckapprox} of the approximate reverse SDE \eqref{eq:reverseapprox}. Hence, we define the residual of the Fokker--Planck equation for the approximate reverse SDE \eqref{eq:reverseapprox} for any $t\in [0,T)$ as 
\begin{align} \label{eq:fpresid}
    R(\theta,p_\theta,t) = V(T-t)^{-1}\int_t^T\left\|\frac{\partial p_\theta}{\partial s}(\cdot,s) + \nabla \cdot (f(\cdot,s)p_\theta(\cdot,s)) - \frac{1}{2}g^2(s)\nabla^2(p_\theta(\cdot,s))\right\|_{L^2(\Omega)}^2 ds, 
\end{align}
where $V(r):=r\operatorname{Vol}(\Omega)$ is the volume of $[T-r,T]\times \Omega$. We refer to $R$ as the \emph{Fokker--Planck residual}.

For deriving the residual corresponding to the approximate log-Fokker--Planck equation \eqref{eq:logfokkerplanckapprox},  
we consider a neural network with network parameters $\theta$ which is trained to determine  $u_\theta$ approximating the solution to \eqref{eq:logfokkerplanckapprox}.
Note that $u_\theta$ satisfies
\begin{align}\label{eq:restransform}
    \begin{split}
    &\frac{\partial u_\theta}{\partial t}(x,t) + \nabla \cdot f^{SDE}_{\theta}(x,t) + \nabla u_\theta(x,t) \cdot f^{SDE}_{\theta}(x,t) + \frac{1}{2}g^2(t)\|\nabla u_\theta(x,t)\|_2^2 + \frac{1}{2}g^2(t) \nabla^2 u_\theta(x,t) \\
    &= \frac{\partial u_\theta}{\partial t}(x,t) + \nabla \cdot (f(x,t)-g^2(t)\nabla u_\theta(x,t)) + \nabla u_\theta(x,t) \cdot (f(x,t)-g^2(t)\nabla u_\theta(x,t)) \\& \qquad  + \frac{1}{2}g^2(t)\|\nabla u_\theta(x,t)\|_2^2 + \frac{1}{2}g^2(t) \nabla^2 u_\theta(x,t) \\
    &= \frac{\partial u_\theta}{\partial t}(x,t) + \nabla \cdot f(x,t) + \nabla u_\theta(x,t) \cdot f(x,t) - \frac{1}{2}g^2(t)\|\nabla u_\theta(x,t)\|_2^2 - \frac{1}{2}g^2(t) \nabla^2 u_\theta(x,t)
    \end{split}
\end{align}
on $\Omega \times [0,T]$.
This implies that residual of the approximate log-Fokker--Planck equation \eqref{eq:logfokkerplanckapprox} for the approximate reverse SDE \eqref{eq:reverseapprox} is equal to the residual of the forward log-Fokker--Planck equation \eqref{eq:logfokkerplanck} for the forward SDE \eqref{eq:forward}.
Hence, we define the \emph{log-Fokker--Planck residual} corresponding to \eqref{eq:logfokkerplanckapprox} as
\begin{align} 
\begin{split} \label{eq:lfpresid}
   \tilde{R}(\theta,u_\theta,t) &=  V(T-t)^{-1}\int_t^T\left\|\frac{\partial u_\theta}{\partial s}(\cdot,s) + \nabla \cdot f(\cdot,s) + \nabla u_\theta(\cdot,s) \cdot f(\cdot,s) \right.\\
    &\qquad\qquad\quad \left.- \frac{1}{2}g^2(s)\|\nabla u_\theta(\cdot,s)\|_2^2 - \frac{1}{2}g^2(s) \nabla^2 u_\theta(x,s)\right\|_{L^2(\Omega)}^2 ds
\end{split}
\end{align}
 for $t\in [0,T)$. These residuals quantify how well our approximations $p_\theta,u_\theta$ agree with the true solutions $p_\theta^{SDE},u_\theta^{SDE}$ to the approximate Fokker--Planck equations. Next we show how the values attained by these residuals defines an upper bound on the ODE-SDE discrepency.

\section{Theoretical results on the ODE-SDE gap}\label{sec:mainresult}

In this section, we investigate the gap between the ODE- and SDE-induced distributions in terms of Fokker--Planck equations. More precisely, we derive bounds related to the approximate log-Fokker--Planck equation \eqref{eq:logfokkerplanckapprox} in Section \ref{sec:result_aFP}, and in Section \ref{sec:result_potential_aFP} we outline how this theory applies to the associated potential model. In Section \ref{sec:result_score_FP} we provide a sketch of how to derive analogous bounds in terms of the approximate score-Fokker--Planck equation, thus addressing the common score parameterisation.

\subsection{The ODE-SDE gap for the  approximate Fokker--Planck equation}\label{sec:result_aFP}

We present some theoretical results showing that, at a fixed time $t\in [0,T]$, $p^{SDE}_\theta(\cdot,t)$ satisfying the approximate Fokker--Planck equation \eqref{eq:fokkerplanckapprox} converges to the density $p^{ODE}_\theta(\cdot,t)$ of the approximate probability flow ODE \eqref{eq:probflowapprox2} with  respect to the Wasserstein 2-distance $W_2$   as the log-Fokker--Planck residual  $\tilde{R}(\theta,u_\theta,t)$ in \eqref{eq:lfpresid} goes to zero. 
More specifically we prove:
\begin{theorem}\label{prop:mainresult}
Let $t\in[0,T)$ and $\delta >0$ be given, let   $\Omega \subset \mathbb{R}^d$ be a bounded domain   with $\partial \Omega \in C^\infty$, and assume that Assumptions \ref{ass:regularity} hold. Assume that the neural network  $u_\theta \in C^\infty (\Omega \times [0,T])$ is determined such that $\tilde{R}(\theta, u_\theta,t)$ in \eqref{eq:lfpresid} satisfies $\tilde{R}(\theta, u_\theta,t)<\delta$, and   the terminal condition $u_T=\log \pi$ restricted to $\Omega$ and Dirichlet boundary conditions $u_B^{SDE}$ on $\partial \Omega\times [0,T]$. Assume that $p^{SDE}_\theta$ satisfies \eqref{eq:fokkerplanckapprox} on   $\Omega$ with terminal condition  $\pi$ restricted to $\Omega$ and Dirichlet boundary conditions $p_B^{SDE}$ on $\partial \Omega \times [0,T]$, with $u_\theta^{SDE}= \log p_\theta^{SDE}$. Further, let $p^{ODE}_\theta$ be the probability density associated with \eqref{eq:probflowapprox2} with terminal condition $\pi$ restricted to $\Omega$. 
Then, $W_2(p^{ODE}_\theta(\cdot,t), p^{SDE}_\theta(\cdot,t))< C\delta$ for some constant $C>0$ independent of $\delta$. 
\end{theorem}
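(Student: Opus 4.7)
The plan is to convert the approximate Fokker--Planck equation \eqref{eq:fokkerplanckapprox} for $p_\theta^{SDE}$ into a continuity equation and compare it with the ODE continuity equation satisfied by $p_\theta^{ODE}$, which shares the same terminal datum $\pi$. Using the identity $\nabla^2 p = \nabla \cdot (p \nabla \log p)$, \eqref{eq:fokkerplanckapprox} rewrites as
\begin{equation*}
\partial_t p_\theta^{SDE} + \nabla \cdot \bigl( v_\theta^{SDE} p_\theta^{SDE} \bigr) = 0, \qquad v_\theta^{SDE} := f - g^2 \nabla u_\theta + \tfrac12 g^2 \nabla u_\theta^{SDE},
\end{equation*}
whereas $p_\theta^{ODE}$ transports with velocity $f_\theta^{ODE} = f - \tfrac12 g^2 \nabla u_\theta$. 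The pointwise velocity-field gap is therefore simply $\tfrac12 g^2 \nabla w$ with $w := u_\theta^{SDE} - u_\theta$, and the argument reduces to (i) a Wasserstein stability bound for two continuity equations with the same terminal condition but different velocity fields, and (ii) an $L^2$ bound for $\nabla w$ in terms of the log-Fokker--Planck residual $\tilde R$.

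For step (i) I would couple the densities through their common $\pi$-distributed terminal particle $X_T$ and compare the reverse-time characteristic flows $\Phi^{SDE}_{t,T}, \Phi^{ODE}_{t,T}$ driven by $v_\theta^{SDE}$ and $f_\theta^{ODE}$, yielding the a priori coupling bound $W_2^2(p_\theta^{SDE}(\cdot,t), p_\theta^{ODE}(\cdot,t)) \leq \mathbb{E}[\|\Phi^{SDE}_{t,T}(X_T) - \Phi^{ODE}_{t,T}(X_T)\|_2^2]$. A Grönwall argument applied to the $\tau = T-t$ evolution of $\|\Phi^{SDE}-\Phi^{ODE}\|_2$, with the Lipschitz constant of $f_\theta^{ODE}$ finite by Assumptions \ref{ass:regularity} and smoothness of $u_\theta$ on $\bar\Omega$, then reduces the right-hand side to $C_1 \int_t^T \|\nabla w(\cdot, s)\|_{L^2(\Omega)}^2\, ds$, after controlling the pushforward density uniformly via parabolic regularity.

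For step (ii) I would derive a linear PDE for $w$ by subtracting the approximate log-Fokker--Planck equation \eqref{eq:logfokkerplanckapprox} evaluated at $u_\theta^{SDE}$ (which it satisfies exactly) from the same equation evaluated at $u_\theta$ (which produces a pointwise residual $r$ whose squared $L^2$-norm time-integrated equals $V(T-t)\tilde R < V(T-t)\delta$). The quadratic term linearises via $\|\nabla u_\theta^{SDE}\|_2^2 - \|\nabla u_\theta\|_2^2 = (\nabla u_\theta^{SDE} + \nabla u_\theta) \cdot \nabla w$, leading to the backward parabolic equation
\begin{equation*}
\partial_t w + b \cdot \nabla w + \tfrac12 g^2 \nabla^2 w = -r, \qquad b := f_\theta^{SDE} + \tfrac12 g^2 (\nabla u_\theta^{SDE} + \nabla u_\theta),
\end{equation*}
with $w \equiv 0$ on $\partial\Omega \times [0,T]$ (both functions share the boundary data $u_B^{SDE}$) and $w(\cdot, T) \equiv 0$ (both equal $u_T = \log \pi$ at $t = T$). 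A standard energy estimate in reverse time $\tau$ (multiply by $w$, integrate on $\Omega$, integrate by parts, apply Young's inequality and Grönwall) then yields $\int_t^T \|\nabla w(\cdot, s)\|_{L^2(\Omega)}^2\, ds \leq C_2 \tilde R$.

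Combining (i) and (ii) delivers the claimed Wasserstein bound, with the constant $C$ absorbing $T$, the moduli of $f,g,\pi$, and the smoothness and positivity constants from Assumptions \ref{ass:regularity}. The main obstacle I anticipate is securing the a priori control needed for the energy estimate: uniform bounds on $\|\nabla u_\theta^{SDE}\|_{L^\infty(\bar\Omega \times [0,T])}$ and on $\|\nabla \cdot b\|_{L^\infty}$, so that the Grönwall constant remains finite and $\delta$-independent. These rely on parabolic regularity for \eqref{eq:fokkerplanckapprox} on the bounded smooth domain, together with the strict positivity and smoothness of $\pi$ and $p_B^{SDE}$, which ensure $u_\theta^{SDE} = \log p_\theta^{SDE}$ is smooth and uniformly bounded up to the boundary.
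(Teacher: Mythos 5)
Your Step (ii) coincides essentially with the paper's Step I: subtract the two log-Fokker--Planck equations satisfied by $u_\theta^{SDE}$ (exactly) and $u_\theta$ (with residual $q$), linearise the quadratic term via $\|\nabla u_\theta^{SDE}\|_2^2 - \|\nabla u_\theta\|_2^2 = (\nabla u_\theta^{SDE} + \nabla u_\theta)\cdot\nabla w$ so that it joins the drift, pass to reverse time so that the $\nabla^2 w$ term has the parabolic sign, and run an $L^2$ energy estimate with Young, Poincar\'e and Gr\"onwall to obtain $\int_t^T\|\nabla w(\cdot,s)\|_{L^2(\Omega)}^2\,ds\lesssim\tilde R$. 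The a priori control you flag as the chief obstacle --- a $\delta$-independent bound on second derivatives of $u_\theta^{SDE}$ entering $\nabla\cdot b$ --- is precisely the content of Lemma~\ref{lem:diffusionbound}, which the paper proves via parabolic regularity for \eqref{eq:fokkerplanckapprox} and strict positivity of $p_\theta^{SDE}$. Your Step (i), by contrast, takes a genuinely different route from the paper's Step II. The paper argues in an Eulerian fashion: it derives evolution equations for the deterministic transports of $\bar p_\theta^{SDE}$ and $\bar p_\theta^{ODE}$, subtracts them to get a PDE for $\bar e = \bar p_\theta^{SDE}-\bar p_\theta^{ODE}$, tests against $\phi\in H^1(\Omega)$ with $\|\phi\|_{H^1}\le 1$, integrates by parts to move the Laplacian onto $\phi$, and invokes the equivalence of $W_2$-convergence with $H^{-1}$-convergence on a compact domain. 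You work Lagrangian: you couple the two laws through their shared terminal particle $X_T\sim\pi$, compare the two characteristic flows driven by $v_\theta^{SDE}$ and $f_\theta^{ODE}$ pathwise, and apply Gr\"onwall together with the coupling inequality $W_2^2\le \mathbb{E}\,\|\Phi^{SDE}_{t,T}(X_T)-\Phi^{ODE}_{t,T}(X_T)\|_2^2$. Both routes are sound here; yours is arguably more elementary (no $H^{-1}$ duality) and makes the coupling explicit, while the paper's avoids the flow map and its Lipschitz constant. Both still need the same change-of-measure step, bounding $\mathbb{E}\,\|\nabla \bar w(X^{SDE}_\tau,\tau)\|_2^2$ by $\|\bar p_\theta^{SDE}(\cdot,\tau)\|_{L^\infty(\Omega)}\,\|\nabla\bar w(\cdot,\tau)\|_{L^2(\Omega)}^2$, so the regularity input is identical. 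One shared subtlety neither version spells out: since the estimates are posed on the bounded domain $\Omega$, the Lagrangian picture requires the characteristics to stay in $\Omega$ on $[t,T]$ (the Eulerian version absorbs this into the Dirichlet boundary data); you should make this explicit if you flesh out the coupling argument.
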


\begin{proof}
We proceed in two steps. Firstly, we show that there exists a constant  $\tilde C>0$ independent of $\delta$ such that $\|u_\theta(\cdot,\tau)- u_{\theta}^{SDE}(\cdot,\tau)\|_{L^2(\Omega)}< \tilde C\delta$  which allows us to show that   $\int_0^\tau\|\nabla u_\theta(\cdot, s)-\nabla u_{\theta}^{SDE}(\cdot, s)\|^2_{L^2(\Omega)}d s <C\delta$ for the  reverse time variable  $\tau = T-t \in (0,T]$ where the constant  $C>0$ is independent of $\delta$.  
Finally, we prove that $$W_2(p_\theta^{ODE}(\cdot,t), p_\theta^{SDE}(\cdot,t))< C\delta$$ for some constant $C>0$ independent of $\delta$.\\
\textbf{Step I:} To prove the first part, assume that $u^{SDE}_\theta$ solves \eqref{eq:logfokkerplanckapprox} on $\Omega\times [0,T]$, that is
\begin{align} \label{eq:fpexactorig}
    \begin{split}
    \frac{\partial u_{\theta}^{SDE}}{\partial t}(x,t)& + \nabla \cdot f_\theta^{SDE}(x,t) + \nabla u_{\theta}^{SDE}(x,t) \cdot f_\theta^{SDE}(x,t)\\
     &+ \frac{1}{2}g^2(t)\|\nabla u_{\theta}^{SDE}(x,t)\|_2^2 + \frac{1}{2}g^2(t) \nabla^2 u_{\theta}^{SDE}(x,t) = 0 
    \end{split}
\end{align}
for $(x,t)\in \Omega\times [0,T]$,
equipped   with terminal data $u_T$ restricted to $\Omega$ and boundary conditions $u_B^{SDE}$ on $\partial \Omega\times [0,T]$.
Further, assume that $u_\theta$ satisfies  \eqref{eq:logfokkerplanckapprox} on  $\Omega\times [0,T]$ with some residual $q$, i.e.
\begin{align} \label{eq:fpapproxorig}
    \begin{split}
    \frac{\partial u_\theta}{\partial t}(x,t)& + \nabla \cdot f_\theta^{SDE}(x,t) + \nabla u_\theta(x,t) \cdot f_\theta^{SDE}(x,t)\\
    +& \frac{1}{2}g^2(t)\|\nabla u_\theta(x,t)\|_2^2 + \frac{1}{2}g^2(t) \nabla^2 u_\theta(x,t) =q(x,t) 
    \end{split}
\end{align}
for $(x,t)\in \Omega\times [0,T]$,
equipped   with terminal data $u_T$ restricted to $\Omega$ and boundary conditions $u_B^{SDE}$ on $\partial \Omega\times [0,T]$.

As \eqref{eq:fpexactorig} and \eqref{eq:fpapproxorig} are equipped with terminal conditions, it is more natural to work with the corresponding reverse time equations.  Writing the reverse time variable as $\tau = T-t \in [0,T]$ we have 
\begin{align} \label{eq:fpexact}
    \begin{split}
    \frac{\partial \bar u_{\theta}^{SDE}}{\partial \tau}(x,\tau)& - \nabla \cdot \bar f_\theta^{SDE}(x,\tau) - \nabla \bar u_{\theta}^{SDE}(x,\tau) \cdot \bar f_\theta^{SDE}(x,\tau) \\
    &- \frac{1}{2}\bar g^2(\tau)\|\nabla \bar u_{\theta}^{SDE}(x,\tau)\|_2^2 - \frac{1}{2}\bar g^2(\tau) \nabla^2 \bar u_{\theta}^{SDE}(x,\tau) = 0.
    \end{split}
\end{align}
and
\begin{align}\label{eq:fpapprox}
    \begin{split}
    \frac{\partial \bar u_\theta}{\partial \tau}(x,\tau)& - \nabla \cdot \bar f_\theta^{SDE}(x,\tau) - \nabla \bar u_\theta(x,\tau) \cdot \bar f_\theta^{SDE}(x,\tau)\\ 
    & - \frac{1}{2}\bar g^2(\tau)\|\nabla \bar u_\theta(x,\tau)\|_2^2 - \frac{1}{2}\bar g^2(\tau) \nabla^2 \bar u_\theta(x,\tau) = \bar q(x,\tau).
    \end{split}
\end{align}
Note that the log-Fokker--Planck residual \eqref{eq:lfpresid} is related to $\bar q(\cdot,\tau)$ for $\tau =T-t$ by 
\begin{align} \label{eq:q_res}
\begin{split}
  \tilde{R}(\theta,u_\theta,t)   = \tilde{R}(\theta,u_\theta,T-\tau)  
        =V(\tau)^{-1}\int_{T-\tau}^T\| q(\cdot, s)\|^2_{L^2(\Omega)}ds
   = V(\tau)^{-1}\int_{0}^\tau\|\bar q(\cdot , s)\|^2_{L^2(\Omega)}d s
    \end{split}
\end{align}
which follows from \eqref{eq:restransform} and \eqref{eq:fpapproxorig}.
Subtracting \eqref{eq:fpexact} for $\bar u_\theta^{SDE}$ from \eqref{eq:fpapprox} for $\bar u_\theta$ yields
\begin{align*}
    \begin{split}
   & \frac{\partial (\bar u_\theta(x,\tau) - \bar u_\theta^{SDE}(x,\tau))}{\partial \tau}(x,\tau) - \nabla(\bar u_\theta(x,\tau)-\bar  u_\theta^{SDE}(x,\tau)) \cdot \bar f^{SDE}_\theta(x,\tau)\\ 
    &\qquad - \frac{1}{2} \bar g^2(\tau)(\|\nabla \bar u_\theta(x,\tau)\|^2_2 - \|\nabla \bar u_{\theta}^{SDE}(x,\tau)\|_2^2)  -\frac{1}{2}\bar g^2(\tau)\nabla^2(\bar u_\theta-\bar u_\theta^{SDE})(x,\tau)= \bar q(x,\tau).
    \end{split}
\end{align*}
We define the error $e_{\bar u}=\bar u_\theta - \bar u_\theta^{SDE}$ on $\Omega\times [0,T]$. Note that the boundary and terminal conditions of $\bar u_{\theta}^{SDE}$ and $\bar u_\theta$ imply that $e_{\bar u}(x,0)=0$ for all $x\in \Omega$ and that $e_{\bar u}$ has homogeneous Dirichlet boundary conditions. This allows us to write a PDE for the error $e_{\bar u}$ given by
\begin{align*}
  \frac{\partial e_{\bar u}}{\partial \tau}(x,\tau) - \nabla e_{\bar u}(x,\tau) \cdot \bar f^{SDE}_\theta(x,\tau)  - \frac{1}{2}\bar g^2(\tau)(\|\nabla \bar u_\theta(x,\tau)\|^2_2 &- \|\nabla \bar u_{\theta}^{SDE}(x,\tau)\|_2^2) \\&- \frac{1}{2}\bar g^2(\tau)\nabla^2 e_{\bar u} (x,\tau)= \bar q(x,\tau).
\end{align*}
Using  $\frac{1}{2}\frac{\partial e^2_{\bar u}}{\partial \tau} = e_{\bar u}\frac{\partial e_{\bar u}}{\partial \tau}$ we obtain an equation for the squared error $e^2_{\bar u}$ which reads
\begin{align*}
\begin{split}
    \frac{1}{2}\frac{\partial e^2_{\bar u}}{\partial \tau}(x,\tau) = e_{\bar u}(x,\tau)\nabla e_{\bar u}(x,\tau)& \cdot \bar f^{SDE}_\theta(x,\tau) + \frac{1}{2}\bar g^2(\tau)e_{\bar u}(x,\tau)(\|\nabla \bar u_\theta(x,\tau)\|^2_2 - \|\nabla \bar u_{\theta}^{SDE}(x,\tau)\|_2^2)  \\
    &+ \frac{1}{2}\bar g^2(\tau)e_{\bar u}(x,\tau) \nabla^2(e_{\bar u}(x,\tau)) + e_{\bar u}(x,\tau)\bar q(x,\tau).
\end{split}
\end{align*}
We integrate to get an equation for the $L^2$-error given by
\begin{align*}
    \begin{split}
    \frac{1}{2}\frac{\partial}{\partial \tau}\|e_{\bar u}(\cdot,\tau)\|^2_{L^2(\Omega)} 
    &= \int_\Omega e_{\bar u}(x,\tau)\nabla e_{\bar u}(x,\tau)\cdot \bar f_\theta^{SDE}(x,\tau)dx  \\
        & \qquad + \frac{1}{2}\bar g^2(\tau)\int_\Omega e_{\bar u}(x,\tau)(\|\nabla \bar u_\theta(x,\tau)\|^2_2 - \|\nabla \bar u_{\theta}^{SDE}(x,\tau)\|_2^2) dx \\ & \qquad + \frac{1}{2}\bar g^2(\tau)\int_\Omega e_{\bar u}(x,\tau) \nabla^2(e_{\bar u}(x,\tau))dx + \int_\Omega e_{\bar u}(x,\tau)\bar q(x,\tau)dx.
                        \end{split}
\end{align*}
Note that 
\begin{align*}
&\frac{1}{2} \bar g^2(\tau)\int_\Omega e_{\bar u}(x,\tau)(\|\nabla \bar u_\theta(x,\tau)\|^2_2 - \|\nabla \bar u_{\theta}^{SDE}(x,\tau)\|_2^2)dx\\
& = \frac{1}{2}\bar g^2(\tau)\int_\Omega e_{\bar u}(x,\tau)(\nabla \bar u_\theta(x,\tau)+\nabla \bar u_{\theta}^{SDE}(x,\tau))\cdot \nabla e_{\bar u} (x,\tau) dx\\
        & =\frac{1}{4}\bar g^2(\tau)\int_\Omega (\nabla \bar u_\theta(x,\tau)+\nabla \bar u_{\theta}^{SDE}(x,\tau))\cdot \nabla e^2_{\bar u} (x,\tau) dx \\        
        & = - \frac{1}{4}\bar g^2(\tau)\int_\Omega (\nabla^2 \bar u_\theta(x,\tau)+\nabla^2 \bar u_{\theta}^{SDE}(x,\tau))  e^2_{\bar u} (x,\tau) dx      
\end{align*}
by integration by parts together with the homogeneous boundary conditions of $e_{\bar u}$. Using  $\frac{1}{2}\frac{\partial e^2_u}{\partial \tau} = e_{\bar u}\frac{\partial e_{\bar u}}{\partial \tau}$ and applying integration by parts again yields
\begin{align*}
 \begin{split}
    \frac{1}{2}\frac{\partial}{\partial \tau}\|e_{\bar u}(\cdot,\tau)\|^2_{L^2(\Omega)}       
    &= -\frac{1}{2}\int_\Omega e^2_{\bar u}(x,\tau) \nabla \cdot \bar {f}^{SDE}_\theta(x,\tau)dx \\
        & \qquad - \frac{1}{4}\bar g^2(\tau)\int_\Omega (\nabla^2 \bar u_\theta(x,\tau)+\nabla^2 \bar u_{\theta}^{SDE}(x,\tau))  e^2_{\bar u} (x,\tau)dx\\& \qquad - \frac{1}{2}\bar g^2(\tau)\int_\Omega \|\nabla(e_{\bar u}(x,\tau))\|_2^2dx + \int_\Omega e_{\bar u}(x,\tau)\bar q(x,\tau)dx.     
    \end{split}
\end{align*}
From Lemma \ref{lem:diffusionbound} and $\bar u_\theta \in C^\infty(\Omega \times [0,T])$, it follows that there exists $L\in \mathbb R$ such that $\nabla^2 \bar u_\theta+\nabla^2 \bar u_{\theta}^{SDE} \geq L$. Then,
\begin{align}\label{eq:estimate}
\begin{split}
    \frac{1}{2}\frac{\partial}{\partial \tau}\|e_{\bar u}(\cdot,\tau)\|^2_{L^2(\Omega)} 
    &\leq \frac{1}{2}\|\nabla \cdot \bar{f}^{SDE}_\theta(\cdot,\tau)\|_{L^{\infty}(\Omega)}\|e_{\bar u}(\cdot,\tau)\|^2_{L^2(\Omega)}-\frac{1}{4}\bar g^2(\tau)L\|e_{\bar u}(\cdot,\tau)\|^2_{L^2(\Omega)}  \\ 
        &\qquad  -\frac{1}{2}\bar g^2(\tau)\|\nabla(e_{\bar u}(\cdot,\tau))\|_{L^2(\Omega)}^2 + \frac{1}{2\epsilon}\|e_{\bar u}(\cdot,\tau)\|^2_{L^2(\Omega)} + \frac{\epsilon}{2}\|\bar q(\cdot,\tau)\|^2_{L^2(\Omega)}, 
    \end{split}
\end{align}
which follows from applying  Young's inequality and holds for any $\epsilon >0$. The  Poincar\'e inequality $K\|e_{\bar u}(\cdot,\tau)\|_{L^2(\Omega)}\leq   \|\nabla e_{\bar u}(\cdot,\tau)\|_{L^2(\Omega)}$  for some constant $K>0$ yields
\begin{align*}
\begin{split}
\frac{1}{2}\frac{\partial}{\partial \tau}\|e_{\bar u}(\cdot,\tau)\|^2_{L^2(\Omega)} 
   & \leq \frac{1}{2}\|\nabla \cdot \bar {f}^{SDE}_\theta(\cdot,\tau)\|_{L^{\infty}(\Omega)}\|e_{\bar u}(\cdot,\tau)\|^2_{L^2(\Omega)}  -\frac{1}{4}\bar g^2(\tau)L\|e_{\bar u}(\cdot,\tau)\|^2_{L^2(\Omega)}   \\ 
        & \qquad - \frac{1}{2}\bar g^2(\tau) K\|e_{\bar u}(\cdot,\tau)\|_{L^2(\Omega)}^2+\frac{1}{2\epsilon}\|e_{\bar u}(\cdot,\tau)\|^2_{L^2(\Omega)} + \frac{\epsilon}{2}\|\bar q(\cdot,\tau)\|^2_{L^2(\Omega)}.
\end{split}
\end{align*} 
Integration in time yields
\begin{align*}
\begin{split}
        \|e_{\bar u}(\cdot,\tau)\|^2_{L^2(\Omega)}
        \leq& \|e_{\bar u}(\cdot,0)\|^2_{L^2(\Omega)} + \epsilon\int_0^\tau\|\bar q(\cdot,s)\|^2_{L^2(\Omega)}ds \\
         &+\int_0^\tau\left(\|\nabla \cdot \bar{f}^{SDE}_\theta(\cdot,s)\|_{L^{\infty}(\Omega)} - \frac{1}{2}\bar g^2(s)(2K + L) + \frac{1}{\epsilon}\right)\|e_{\bar u}(\cdot,s)\|^2_{L^2(\Omega)}ds,
\end{split}
\end{align*}
where the term $\|e_{\bar u}(\cdot,0)\|^2_{L^2(\Omega)} $ vanishes due to the zero  initial condition of $e_{\bar u}$.
For $\epsilon$ sufficiently small, the  bracketed term in the last integral is positive. Therefore, we can apply Gronwall's inequality to get 
\begin{align} \label{eq:growall}
\begin{split}
  &  \|e_{\bar u}(\cdot,\tau)\|^2_{L^2(\Omega)}\\& \leq   \epsilon\int_0^\tau\|\bar q(\cdot,s)\|^2_{L^2(\Omega)}ds\ 
    \text{exp}\left(\int_0^\tau \left(\|\nabla \cdot \bar{f}^{SDE}_\theta(\cdot,s)\|_{L^{\infty}(\Omega)} - \frac{1}{2}\bar g^2(s)(2K+L) + \frac{1}{\epsilon}\right)ds\right).
    \end{split}
\end{align}
Using \eqref{eq:q_res} and the fact that $g$ is bounded, this proves that $\|e_{\bar u}(\cdot,\tau)\|^2_{L^2(\Omega)} < C\delta$ for some constant $C>0$ independent of $\delta$ and $\tau$, but dependent on $T$ and $\epsilon$. 

Next, we deduce that  $\int_0^\tau\|\nabla e_{\bar u}(\cdot, s)\|^2_{L^2(\Omega)}d s< C\delta$  for some $C>0$ independent of $\delta$. Note that \eqref{eq:estimate} implies
\begin{multline}  \label{eq:gradbound}
   \underset{s\in [0,\tau]}{\text{min}}\{\bar g^2(s)\}\int_0^\tau\|\nabla e_{\bar u}(\cdot,s)\|^2_{L^2(\Omega)}ds \leq 
   \epsilon\int_0^\tau\|\bar q(\cdot,s)\|^2_{L^2(\Omega)}ds - \|e_{\bar u}(\cdot,\tau)\|^2_{L^2(\Omega)} \\
     + \int_0^\tau\left(\|\nabla \cdot \bar{f}^{SDE}_\theta(\cdot,s)\|_{L^{\infty}(\Omega)} -\frac{1}{2}\bar g^2(s)L + \frac{1}{\epsilon}\right)\|e_{\bar u}(\cdot,s)\|^2_{L^2(\Omega)}ds.
\end{multline}
It follows from \eqref{eq:growall} that for any $s \in [0,\tau]$  we have $\|e_{\bar u}(\cdot,s)\|^2_{L^2(\Omega)} < C\delta$. 
Since $g$ has a positive lower bound by Assumptions \ref{ass:regularity} 
and $\int_{0}^\tau\|\bar q(\cdot,s)\|^2_{L^2(\Omega)}ds < \delta$ by \eqref{eq:q_res}, this yields $\int_0^\tau\|\nabla e_{\bar u}(\cdot,s)\|^2_{L^2(\Omega)}ds < C\delta$  for some $C>0$ independent of $\delta$. \\
\textbf{Step II:} Finally, we prove that $W_2(p^{ODE}_\theta(\cdot,t), p^{SDE}_\theta(\cdot,t))<C\delta$.  As $e_{\bar u}=\bar u_\theta -\bar u_{\theta}^{SDE}$ in reverse time $\tau$, we have $e_{ u}= u_\theta - u_{\theta}^{SDE}$ in forward time $t=T-\tau$ with  $\nabla e_{ u}=\nabla u_\theta - \nabla u_{\theta}^{SDE}$.
Starting from the probability flow ODE of the approximate SDE \eqref{eq:probflowapprox} and the approximate  drifts $    f^{SDE}_\theta$ in \eqref{eq:approxrevdrift} and     $f^{ODE}_\theta$ in \eqref{eq:approxdrift}, we obtain in forward time 
\begin{align*}
   \frac{dx_t}{dt} &= f^{SDE}_\theta(x_t,t) +\frac{1}{2} g^2(t)\nabla u_{\theta}^{SDE}(x_t,t)\\
    &= f(x_t,t) -  g^2(t)\nabla u_\theta(x_t,t) +\frac{1}{2} g^2(t)(\nabla u_\theta(x_t,t) - \nabla e_u(x_t,t)) \\
    &= f(x_t,t) - \frac{1}{2} g^2(t)\nabla u_\theta(x_t,t) - \frac{1}{2} g^2(t)\nabla e_u(x_t,t) \\
    &= f^{ODE}_\theta(x_t,t) - \frac{1}{2} g^2(t)\nabla e_u(x_t,t).
\end{align*}
This implies that  trajectories traced out by particles obeying \eqref{eq:probflowapprox} with density $p^{SDE}_\theta$ are close to particles obeying \eqref{eq:probflowapprox2} with density $p^{ODE}_\theta$ provided the error $\nabla e_u§$
is small. The densities $p_{\theta}^{SDE}$ and $p^{ODE}_\theta$ induced by the probability flow ODEs \eqref{eq:probflowapprox}, \eqref{eq:probflowapprox2} are associated with  $\bar p_{\theta}^{SDE}$ and $\bar p^{ODE}_\theta$ in reverse time with the corresponding  probability flows in reverse time given by
\begin{align}\label{eq:probflowSDErev}
   \frac{d \bar x_\tau}{d\tau} 
    &= -\bar f^{ODE}_\theta(\bar x_\tau,\tau) + \frac{1}{2}\bar g^2(\tau)\nabla e_{\bar u}(\bar x_\tau,\tau)   
\end{align}
and 
\begin{align}\label{eq:probflowODErev}
   \frac{d\bar x_{\tau}}{d\tau} 
    &= -\bar f^{ODE}_\theta(\bar x_\tau,\tau),   
\end{align}
respectively. 
Applying a change of variables  to continuity equations associated with \eqref{eq:probflowODErev} and \eqref{eq:probflowSDErev} yields  
\begin{align*}
    \frac{d\bar u^{ODE}_\theta(\bar x_\tau,\tau)}{d\tau} &= \nabla \cdot \bar f^{ODE}_\theta(\bar x_\tau,\tau)
    \end{align*}
    and
    \begin{align*}
    \frac{d\bar u^{SDE}_\theta(\bar x_\tau,\tau)}{d\tau} 
    &=  \nabla \cdot \bar f^{ODE}_\theta(\bar x_\tau,\tau) - \frac{1}{2}\bar g^2(\tau)   \nabla^2 e_{\bar u}(\bar x_\tau,\tau)
\end{align*}
for $\bar u^{ODE}_\theta= \log \bar p^{ODE}_\theta$
and $\bar u^{SDE}_\theta=\log  \bar p^{SDE}_\theta$,
respectively.
Using $\frac{d\log \bar p}{d\tau} = \frac{1}{\bar p}\frac{d\bar p}{d\tau}$ for density $\bar p$, we obtain
\begin{align*}
    \frac{d \bar p^{ODE}_\theta(\bar x_\tau,\tau)}{d\tau} &= \bar p^{ODE}_\theta(\bar x_\tau,\tau)\,\nabla\cdot \bar f^{ODE}_\theta (\bar x_\tau,\tau)
    \end{align*}
    and
    \begin{align*}
    \frac{d\bar p^{SDE}_\theta(\bar x_\tau,\tau)}{d\tau} &= \bar p^{SDE}_\theta(\bar x_\tau,\tau)\,\nabla \cdot\bar f^{ODE}_\theta(\bar x_\tau,\tau) - \frac{1}{2}\bar g^2(\tau) \bar p^{SDE}_\theta(\bar x_\tau,\tau) \nabla^2 e_{\bar u}(\bar x_\tau,\tau),  
\end{align*}
respectively.
Then, the error  $\bar e =\bar p^{SDE}_\theta -\bar p_\theta^{ODE} $ between $\bar p^{SDE}_\theta$ and $\bar p_\theta^{ODE}$ satisfies
\begin{align*}
    \frac{d\bar e (\bar x_\tau,\tau)}{d\tau} = \bar e(\bar x_\tau,\tau) \nabla \cdot \bar f^{ODE}_\theta (\bar x_\tau,\tau) - \frac{1}{2}\bar g^2(\tau)\bar p^{SDE}_\theta(\bar x_\tau,\tau) \nabla^2 e_{\bar u}(\bar x_\tau,\tau), 
\end{align*}
equipped with the initial condition $\bar e(\cdot,0)=0$ as $\bar p_{\theta}^{SDE}$ and $\bar p^{ODE}_\theta$ satisfy the same  initial conditions. 

Since $\overline \Omega$ is a compact set, 
convergence in the Wasserstein-2 distance $W_2$ between two measures is equivalent to the convergence in the dual Sobolev space $H^{-1}$ of $H^1=W^{1,2}$.
For $\phi \in H^1(\Omega)$ with $\|\phi\|_{H^1(\Omega)} \leq 1$, we have
\begin{align*}
   & \frac{d}{d\tau}\int_\Omega \phi(x)  \bar e(x,\tau) dx\\ &= \int_\Omega  \phi(x)  \frac{d \bar e(x,\tau)}{d\tau}dx \\
    &= \int_\Omega  \phi(x) \bar e(x,\tau) \nabla\cdot \bar f^{ODE}_\theta(x,\tau) dx - \frac{1}{2}\bar g^2(\tau)\int_\Omega  \phi(x) \bar p^{SDE}_\theta(x,\tau) \nabla^2 e_{\bar u}(x,\tau) dx.
\end{align*}
Integrating with respect to $\tau$ yields
\begin{align*}
\int_\Omega  \phi(x) \bar e(x,\tau) dx &= \int_\Omega  \phi(x)\bar  e(x,0) dx + \int_0^\tau  \int_\Omega  \phi(x)  \nabla \cdot \bar f^{ODE}_\theta(x,s)\bar e(x,s)dx ds\\&\quad - \frac{1}{2} \int_0^\tau  \bar g^2(s)\int_\Omega  \phi(x) \bar p^{SDE}_\theta(x,s) \nabla^2 e_{\bar u}(x,s) dx ds.   
\end{align*}
As  $\bar e(\cdot,0) = 0$, this yields
\begin{align*}
   & \left| \int_\Omega  \phi(x) \bar e(x,\tau) dx \right| \\&\leq \int_0^\tau \left(\left|\int_\Omega  \phi(x) \bar e(x,s)  \nabla \cdot \bar f^{ODE}_\theta(x,s)   dx \right| + \frac{1}{2}\bar g^2(s) \left|\int_\Omega \bar p^{SDE}_\theta(x,s)\phi(x)  \nabla^2 e_{\bar u}(x,s) dx \right| \right)ds \\
    & \leq    \int_0^\tau  \left\|  \nabla\cdot \bar f^{ODE}_\theta(\cdot,s) \right\|_{L^{\infty}(\Omega)} \left|\int_\Omega  \phi(x) \bar e(x,s) dx \right| ds\\&\quad + \frac{1}{2}\int_0^\tau \bar g^2(s) \left\|\bar p^{SDE}_\theta(\cdot,s)\right\|_{L^\infty(\Omega)} \left|\int_\Omega  \nabla \phi(x)  \cdot \nabla e_{\bar u}(x,s) dx \right|ds \\
    &\leq \int_0^\tau \left\|  \nabla \cdot\bar f^{ODE}_\theta(\cdot,s) \right\|_{L^\infty(\Omega)} \left|\int_\Omega  \phi(x)  \bar e(x,s) dx \right|ds \\&\quad + \frac{1}{2} \int_0^\tau \bar g^2(s) \left\|\bar p^{SDE}_\theta(\cdot,s)\right\|_{L^\infty(\Omega)} \|\nabla\phi\|_{L^2(\Omega)}  \|\nabla e_{\bar u}(\cdot,s)\|_{L^2(\Omega)}ds.
\end{align*}
Applying Gronwall's inequality to this, we obtain the estimate
\begin{align*}
    &  \left| \int_\Omega  \phi(x) \bar e(x,\tau) dx \right| \\&\leq \frac{1}{2}\int_0^\tau  \bar g^2(s) \left\|\bar p^{SDE}_\theta(\cdot,s)\right\|_{L^\infty(\Omega)} \|\nabla\phi\|_{L^2(\Omega)}  \|\nabla e_{\bar u}(\cdot,s)\|_{L^2(\Omega)}ds \\
     &\qquad \exp\left(\int_0^\tau \left\|  \nabla\cdot \bar f^{ODE}_\theta(\cdot,s) \right\|_{L^\infty(\Omega)} ds\right)\\
     &\leq C  \int_0^\tau   \left\|\bar p^{SDE}_\theta(\cdot,s)\right\|_{L^\infty(\Omega)}  \|\nabla e_{\bar u}(\cdot,s)\|_{L^2(\Omega)}ds
\end{align*}
for some constant $C>0$ depending on $\bar f_\theta^{ODE}$, $\bar g$ and $T$. The uniform boundedness of $\bar p^{SDE}_\theta$ implies that $ \left| \int_\Omega  \phi(x) \bar e(x,\tau) dx \right| < C\delta$. Note that $ \left| \int_\Omega  \phi(x) \bar e(x,\tau) dx \right|$  
 goes to zero as $\int_0^\tau\|\nabla e_{\bar u} (\cdot,s) \|_{L^2(\Omega)}ds $ goes to zero. 
We have shown therefore that $\bar e = \bar p^{SDE}_\theta -\bar  p^{ODE}_\theta$ vanishes in $H^{-1}$ as $\int_0^\tau\|\nabla e_u (\cdot,s)\|_{L^2}ds$ goes to zero and therefore it follows that the 2-Wasserstein distance between $p^{SDE}_\theta$ and $p^{ODE}_\theta$ goes to zero.

\end{proof}

\subsection{The ODE-SDE gap for the potential model associated with the approximate Fokker--Planck equation}\label{sec:result_potential_aFP}

A key benefit of score-based models is that scores are agnostic to multiplicative scaling of the underlying density, implying known normalising constants are not required for their implementation. So far we have defined the log-Fokker--Planck equation \eqref{eq:logfokkerplanckapprox} as the process governing the logarithm of the density $p_\theta^{SDE}$, and thus have implicitly assumed that $u_\theta$ approximates the logarithm of a normalised density. In practice, controlling the integral of a neural approximation is nontrivial, and so we apply an unnormalised network $\phi_\theta$ as the potential model. If we assume $u_\theta$ as applied above is an approximate potential such that $\exp(u_\theta)$ is a normalised density, then we can relate this to $\phi_\theta$ by introducing a (potentially time-varying)  normalising constant $Z_t$  for $\exp(\phi_\theta(\cdot,t))$.  This gives the relation
\begin{align*}
    \phi_\theta(x,t) = u_\theta(x,t) + \log Z_t. 
\end{align*}
The bounds on the ODE-SDE gap in Theorem \ref{prop:mainresult} also hold in this setting, as outlined in the remark below.

\begin{remark}
    If $\tilde R(\theta,\phi_\theta,t)<\delta$, then $W_2(p^{ODE}_\theta(\cdot,t), p^{SDE}_\theta(\cdot,t))<C\delta$ for some $C>0$ independent of $\delta$  (under the assumptions of Theorem \ref{prop:mainresult}). Indeed,
    \begin{align*}
        u_\theta(x,t) = u_\theta^{SDE}(x,t) + e_u(x,t) 
    \end{align*}
    by the definition of $e_u$,
    thus implying
    \begin{align*}
        \phi_\theta(x,t) &= u_\theta^{SDE}(x,t) +e_u(x,t) + \log Z_t \\
        &= u_\theta^{SDE}(x,t) + \log Z_T + (e_u(x,t) - \log Z_T + \log Z_t) \\
        &= u_\theta^{SDE}(x,t) + \log Z_T + \tilde{e}_u(x,t),
    \end{align*}
    where we have set 
    \begin{align*}
    \tilde{e}_u(x,t) &= e_u(x,t) - \log Z_T + \log Z_t \\
    &= \phi_\theta(x,t)- u_\theta^{SDE}(x,t)  - \log Z_T
    \end{align*}
    as the error between $\phi_\theta$ and $u^{SDE}_\theta+\log Z_T$.
Since $u_\theta^{SDE}$ is a solution of \eqref{eq:logfokkerplanckapprox}, it follows that $u_\theta^{SDE}  + \log Z_T$ is also a solution of \eqref{eq:logfokkerplanckapprox}. 
Following similar arguments as in \eqref{eq:gradbound}, we deduce that
\begin{align*}
    \int_t^T \|\nabla u_\theta^{SDE}(\cdot,s) - \nabla \phi_\theta(\cdot,s)\|_{L^2(\Omega)}^2 ds &=
    \int_t^T \|\nabla u_\theta^{SDE}(\cdot,s) + \nabla \log Z_T - \nabla \phi_\theta(\cdot,s)\|_{L^2(\Omega)}^2 ds \\
    &= \int_t^T \|\nabla \tilde{e}_u(\cdot,s)\|_{L^2(\Omega)}^2 ds \\
    &< C\delta 
\end{align*}
for some $C>0$ independent of $\delta$. Thus, $W_2(p^{ODE}_\theta(\cdot,t), p^{SDE}_\theta(\cdot,t))<C\delta$ follows by applying the steps in the proof of Theorem \ref{prop:mainresult} with $\phi_\theta$ substituted in place of $u_\theta$.

\end{remark}

\subsection{The ODE-SDE gap for the  approximate score-Fokker--Planck equation}\label{sec:result_score_FP}
In this work we   primarily focus   on the underlying mean-field behaviour observed in score-based diffusion models and thus our focus has been on the potential parameterisation discussed so far. However, in most practical implementations, a score parameterisation  is adopted due to computational efficiency, given by  $s_\theta(x,t) \approx \nabla u(x,t)= \nabla \log p(x,t)$ for the density $p$ and the log-density $u$ of the forward SDE \eqref{eq:forward}. The the score parameterisation is linked to the score-Fokker--Planck equation, see e.g.\  \cite{lai2023fpdiffusion}. To ensure applicability of our results to this case, we argue in this section that the bounds on the ODE-SDE gap in Theorem \ref{prop:mainresult} also hold for the score parameterisation. 

A Score-Fokker--Planck equation can be derived by taking the gradient of the associated log-Fokker--Planck equation. In order to derive an analogous result to Theorem \ref{prop:mainresult} we are interested in the score-Fokker--Planck equation of the approximate SDE with log-Fokker--Planck equation \eqref{eq:logfokkerplanckapprox}. Taking the gradient of \eqref{eq:logfokkerplanckapprox} and setting  $s_{\theta}^{SDE}(x,t) = \nabla u_{\theta}^{SDE}(x,t) \in \mathbb{R}^d$ yields the \emph{approximate score-Fokker--Planck equation}
\begin{align} \label{eq:score_fp}
    \begin{split}
    \frac{\partial s_{\theta}^{SDE}}{\partial t}(x,t)& + \nabla(\nabla \cdot   f_\theta^{SDE}(x,t)) + (\nabla s_{\theta}^{SDE}(x,t)) f_\theta^{SDE}(x,t) + (\nabla f_\theta^{SDE}(x,t)) s_{\theta}^{SDE}(x,t) \\
    &+ g^2(t)(\nabla s_{\theta}^{SDE}(x,t))s_{\theta}^{SDE}(x,t) + \frac{1}{2}  g^2(t) \nabla(\nabla \cdot s_{\theta}^{SDE}(x,t))=0.
    \end{split}
\end{align}
Here we use analogous notation to the potential case so that $s_{\theta}^{SDE}(x,t)$ is the true score associated with the approximate reverse SDE \eqref{eq:reverseapprox} and is linked with the density $p_\theta^{SDE}$ via $s_{\theta}^{SDE}=\nabla \log p_{\theta}^{SDE}$. Similarly, we also introduce the score $s_{\theta}^{ODE}(x,t) = \nabla u_{\theta}^{ODE}(x,t)=\nabla \log p_{\theta}^{ODE}(x,t) \in \mathbb{R}^d$ associated with the approximate probability flow ODE \eqref{eq:probflowapprox2}. 

Let $s_\theta(x,t)$ denote a score model approximating $\nabla \log p(x,t)$. Following an analogous calculation to \eqref{eq:restransform}, the residual corresponding to the approximate score-Fokker--Planck equation \eqref{eq:score_fp} can be written as the residual of a score-Fokker--Planck equation and we define the \emph{score-Fokker--Planck residual} by:
\begin{align}\label{eq:res_score}
    \begin{split} 
   R^s(\theta,s_\theta,t) &=  V(T-t)^{-1}\int_t^T\left\|
   \frac{\partial s_\theta}{\partial r}(\cdot,r) + \nabla (\nabla \cdot f(\cdot,r)) + (\nabla  s_\theta(\cdot,r)) f(\cdot,r)  \right.\\
    &\quad\quad \left. + (\nabla  f(\cdot,r))s_\theta(\cdot,r) - g^2(r)(\nabla s_\theta(\cdot,r)) s_\theta(\cdot,r)   - \frac{1}{2}g^2(r) \nabla (\nabla \cdot s_\theta(\cdot,r))\right\|_{L^2(\Omega)}^2 dr.
\end{split}
\end{align}

We can now state analogous result to Theorem~\ref{prop:mainresult} for the approximate score-Fokker--Planck equation: If $R^s(\theta,s_\theta,t)<\delta$, then $W_2(p^{ODE}_\theta(\cdot,t), p^{SDE}_\theta(\cdot,t))<C\delta$ for some $C>0$ independent of $\delta$ (under the assumptions of Theorem \ref{prop:mainresult}). This result can be derived by applying analogous Steps I and II in the proof of Theorem \ref{prop:mainresult} and generalising them to vector-valued functions as appropriate.  More precisely, Step I of the proof has to be generalised to vector-valued solutions $s_\theta^{SDE}$ of \eqref{eq:score_fp} as opposed to the scalar solution $u_\theta^{SDE}$ of \eqref{eq:logfokkerplanckapprox}, but due to the similarity of the equations this step can be done analogously. Step II follows as in the proof of Theorem \ref{prop:mainresult}. Due to the similarity of the proofs, the detailed proof is omitted here.

\section{Numerical experiments}\label{sec:numerics}
To demonstrate our analytical results numerically and ensure visual interpretability, we implement several diffusion models in $\mathbb{R}^2$ that attain a range of log-Fokker--Planck residual values \eqref{eq:lfpresid} using various datasets. For the forward SDE, we choose $f(x,t)=-x$ and $g(t)=1$, resulting in the simple Ornstein--Uhlenbeck process
\begin{align*}
    dx_t = -x_tdt + dW_t.
\end{align*}
Following \eqref{eq:logfokkerplanck}, the associated log-Fokker--Planck equation is given by
\begin{align*}
    \frac{\partial u(x,t)}{\partial t} = 2 + x \cdot \nabla u(x,t) + \frac{1}{2}\|\nabla u(x,t)\|_2^2 + \frac{1}{2}\nabla^2 u(x,t).
\end{align*}
In our experiments we take three different data distributions and train a neural network to minimise the loss function
\begin{align*}
    L(\theta,w_R) := \mathcal{L}_{DSM}(\theta,\nabla \phi_\theta, \lambda) + w_R\tilde{R}(\theta,\phi_\theta,0).
\end{align*}
for differing values of $w_R$ where $ \mathcal{L}_{DSM}$ and $\tilde{R}$ are defined in \eqref{DSM} and \eqref{eq:lfpresid}, respectively, and $\lambda$ is set according to the likelihood weighting. Note that for our specific setting, we have
\begin{align*}
    \tilde{R}(\theta,\phi_\theta,t) &= 
   \mathbb{E}_{\subalign{&s \sim U(t,T)\\ &x \sim U(\Omega)}}\left[\left(    \frac{\partial u(x,s)}{\partial t} - 2 - x \cdot \nabla u(x,s) - \frac{1}{2}\|\nabla u(x,s)\|_2^2 - \frac{1}{2}\nabla^2 u(x,s)\right)^2\right],
\end{align*}
Therefore both the denoising score matching objective $\mathcal{L}_{DSM}(\theta,\nabla \phi_\theta,\lambda)$ and the log-Fokker--Planck residual $\tilde{R}(\theta,\phi_\theta,0)$ are approximated using Monte Carlo estimation. We set $T=10$ and  the likelihood weighting implies  $\lambda(t)=1$ for $t\in [0,T]$. Note that we do not add terms that enforce boundary conditions in space or time, since the denoising score matching objective \eqref{DSM} already encourages consistency   with these conditions (up to a multiplicative constant proportional to the underlying density). We generate  samples from $p_\theta^{SDE}(\cdot,0)$ and $p_\theta^{ODE}(\cdot,0)$ using Euler-Maruyama and Euler discretisations of the reverse approximate SDE \eqref{eq:reverseapprox} and the reverse approximate probability flow ODE \eqref{eq:probflowapprox}, respectively. To validate our results we generate 3 million samples from each distribution. Due to computational constraints these samples are then discretised onto a $64\times 64$ grid. When   computing the Wasserstein distances to the target distribution and producing visualisations, we will consider  the discretised distributions. Figure \ref{fig:target_dists} shows the target distributions for our experiments. 
\begin{figure}[H]
    \centering
    \includegraphics[width=0.7\textwidth]{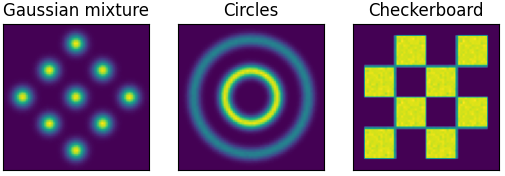}
    \caption{Our three examples include a Gaussian mixture, a non-Gaussian but smooth concentric circles distribution, and a discontinuous checkerboard distribution.}
    \label{fig:target_dists}
\end{figure} 
We parameterise our potential model $\phi_\theta(x,t)$ by a fully connected neural network with 2 hidden layers and 80 nodes per layers. We apply softplus activation functions, which have well defined first and second derivatives as required to evaluate $\tilde{R}(\theta,\phi_\theta,0)$. Each model is trained for 100,000 iterations using Adam with learning rate decaying from $10^{-3}$ down to $10^{-5}$.  Figure~\ref{fig:sample_dists} shows the samples obtained from $p_\theta ^{SDE}(\cdot,0)$ and $p_\theta ^{ODE}(\cdot,0)$ for different weighting parameters $w_R$.

\begin{figure}[htb]
    \centering
    \includegraphics[width=1\textwidth]{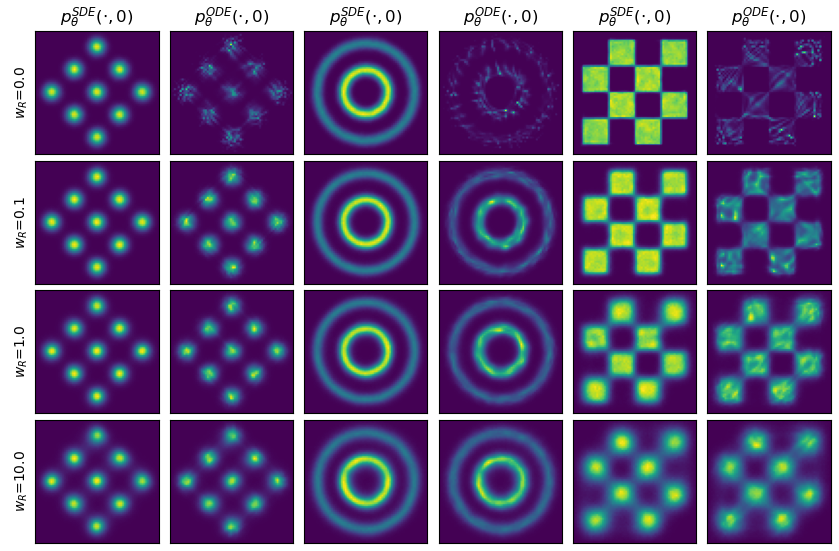}
    \caption{Distributions of $p_\theta ^{SDE}(\cdot,0)$ and $p_\theta ^{ODE}(\cdot,0)$ for weighting parameters $w_R$ taking values in $(0,0.1,1,10)$. The rows indicate which weighting parameter was used, while the columns indicate whether the displayed distribution  is of $p_\theta ^{SDE}(\cdot,0)$ or of $p_\theta ^{ODE}(\cdot,0)$ in the corresponding experiment. Samples displayed from the ODE and SDE samplers were attained using the same score model. }
    \label{fig:sample_dists}
\end{figure}

We see from Figure \ref{fig:sample_dists} that if we only optimise $\mathcal{L}_{DSM}$ (i.e.\ for $w_R=0$) the resulting $p_\theta^{ODE}(\cdot,0)$   is quite different from the true distribution. Notably, areas of high probability in $p_\theta^{ODE}(\cdot,0)$ do coincide with high probability regions of $p_\theta^{SDE}(\cdot,0)$. Therefore in typical generative modelling scenarios it may be difficult to identify this mischaracterisation of the data distribution, given that  the samples generated from $p_\theta^{ODE}(\cdot,0)$ are generally plausible. Visually, we see that adding a factor of $\tilde{R}(\theta,\phi_\theta,0)$ to the loss function initially results in an improvement in $p_\theta^{ODE}$. This is further justified in Table  \ref{tab:odetrue}, which shows that the distance between $p_\theta^{ODE}(\cdot,0)$ and $p_0$ reduces for $w_R=0.1$ and $w_R=1$ when compared to $w_R=0$. Increasing $w_R$ beyond this further reduces the gap between $p_\theta^{ODE}(\cdot,0)$ and $p_\theta^{SDE}(\cdot,0)$ as demonstrated in Table \ref{tab:odesde}, however this comes at the cost of increasing the distance from both $p_\theta^{ODE}(\cdot,0)$ and $p_\theta^{SDE}(\cdot,0)$ to $p_0$ which can  be observed in Tables \ref{tab:odetrue} and \ref{tab:sdetrue} for $w_R=10$. This can clearly be seen in Figure \ref{fig:sample_dists} by the overly smoothed distributions that are attained with higher $w_R$. Table \ref{tab:sdetrue} shows that the quality of $p_\theta^{SDE}$ degrades monotonically with increasing $w_R$ which results in the negative correlation between $\tilde R(\theta,\phi_\theta,0)$ and $\mathcal{L}_{DSM}(\theta,\nabla \phi_\theta,\lambda)$ observed in Figure \ref{fig:DSMxRes_scatter}. From this we conclude that the cost of improving $p_\theta^{ODE}$ is a reduction in the quality of $p_\theta^{SDE}$. Finally, in Figure~\ref{fig:WassxRes_scatter} we evaluate $\tilde R(\theta,\phi_\theta,0)$ for each of our trained models and visualise the relation between  $\tilde R(\theta,\phi_\theta,0)$ and the associated $W_2(p_\theta^{ODE}(\cdot,0),p_\theta^{SDE}(\cdot,0))$ values. This demonstrates a clear positive correlation supporting our theoretical analysis, where we proved an upper bound on the Wasserstein 2-distance between the ODE- and SDE-induced distributions in terms of a Fokker--Planck residual $\tilde R$.

\begin{table}[H] 
\centering
\begin{tabular}{|c|c|c|c|}
\hline 
$w_R$ & Mixture & Circles & Checkerboard \\
\hline
0.0 & 0.011599 & 0.013401 & 0.027196 \\
0.1 & 0.003107 & 0.005283 & 0.007160 \\
1   & 0.002901 & 0.003090 & 0.003528 \\
10  & 0.002208 & 0.001704 & 0.002104 \\
\hline
\end{tabular} 
\caption{Estimated values of $W^2_2(p_\theta^{ODE}(\cdot,0),p_\theta^{SDE}(\cdot,0))$ for each distribution and differing $w_R$ values.}
\label{tab:odesde}
\end{table}

\begin{table}[H] 
\centering
\begin{tabular}{|c|c|c|c|}
\hline 
$w_R$ & Mixture & Circles & Checkerboard \\
\hline
0.0 & 0.010967 & 0.014354 & 0.024416 \\
0.1 & 0.006039 & 0.007373 & 0.011386 \\
1   & 0.005280 & 0.008280 & 0.011428 \\
10  & 0.017225 & 0.009378 & 0.033052 \\
\hline
\end{tabular}
\caption{Estimated values of $W^2_2(p_\theta^{ODE}(\cdot,0),p_0)$ for each distribution and differing $w_R$ values.}
\label{tab:odetrue}
\end{table}

\begin{table}[H] 
\centering
\begin{tabular}{|c|c|c|c|}
\hline 
$w_R$ & Mixture & Circles & Checkerboard \\
\hline
0.0 & 0.002180 & 0.002268 & 0.003681 \\
0.1 & 0.002940 & 0.003323 & 0.005387 \\
1   & 0.004271 & 0.004541 & 0.011552 \\
10  & 0.015989 & 0.009097 & 0.032158 \\
\hline
\end{tabular}
\caption{Estimated values of $W^2_2(p_\theta^{SDE}(\cdot,0),p_0)$ for each distribution and differing $w_R$ values.}
\label{tab:sdetrue}
\end{table}

\begin{figure}[H]
    \centering
    \includegraphics[width=0.7\textwidth]{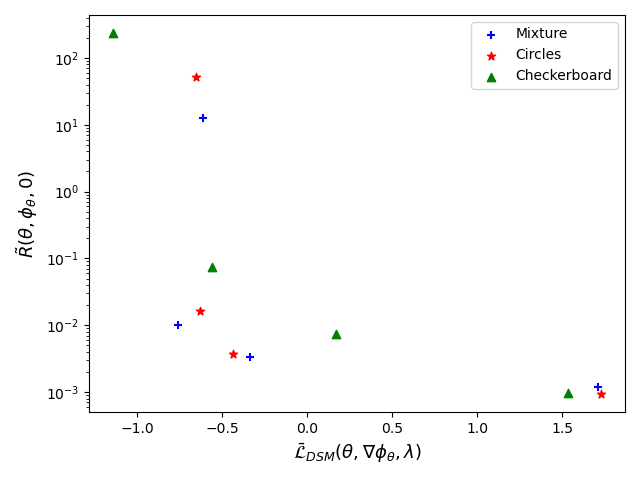}
    \caption{Comparison of the Fokker--Planck residual values against the denoising score matching loss. We see that the attainment of a low $\tilde{R}$ is correlated with a higher $\mathcal{L}_{DSM}$, thus explaining the degradation in sample quality for high $w_R$. Here the $\mathcal{L}_{DSM}$ have been normalised by data distribution to eliminate bias specific to each dataset.}
    \label{fig:DSMxRes_scatter}
\end{figure}

\begin{figure}[H]
    \centering
    \includegraphics[width=0.7\textwidth]{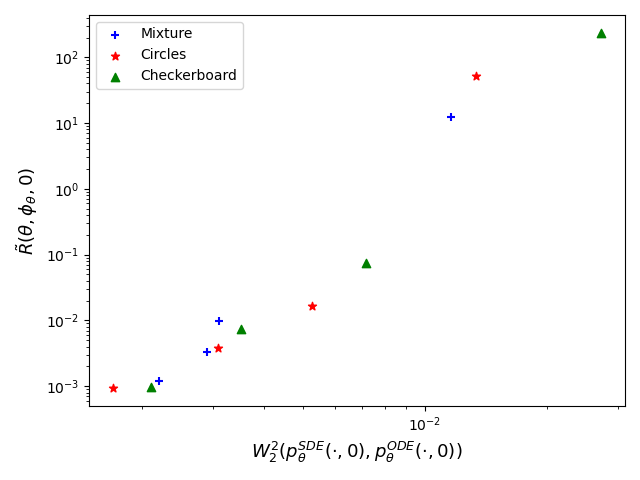}
    \caption{Comparison of the Fokker--Planck residual values against the Wasserstein distance between the ODE and SDE samples.}
    \label{fig:WassxRes_scatter}
\end{figure}

\section{Conclusions}
In this work, we conducted a systematic investigation into the dynamics that arise in score-based diffusion models. We mainly focused on the differences between the generative densities $p_\theta^{SDE}$ and $p_\theta^{ODE}$ defined by the reverse approximate SDE and the approximate probability flow ODE, respectively.  Analytically, we proved that the discrepancy between $p_\theta^{SDE}$ and $p_\theta^{ODE}$  can be bounded by a Fokker--Planck residual in the Wasserstein 2-distance, thus giving a deeper insight into the connection between the two generative distributions in terms of the Fokker--Planck dynamics underlying the diffusion process. Numerically, we showed that  $p_\theta^{SDE}$ and $p_\theta^{ODE}$ can differ substantially when the neural network is trained using the standard score-matching objective. Our numerical experiments also demonstrate that penalising the loss function by the Fokker--Planck residual indeed leads to closing the gap between the ODE and the SDE distributions  in the Wasserstein 2-distance. Our findings revealed that imposing this additional constraint within our loss function could improve the quality of $p_\theta^{ODE}$ when compared to the ground truth, though in exchange for this we observed concurrent degradation in the quality of $p_\theta^{SDE}$. The practical implication of these findings is that enforcing self-consistency through penalisation by the Fokker--Planck residual is unlikely to improve state-of-the-art generation using stochastic samplers. However, for downstream tasks where deterministic generation is required (e.g.\ optimisation-based approaches to inverse problems), such penalisation could provide a potential avenue to improve sample quality.

\section*{Acknowledgements}
TD, LMK, CBS and CB acknowledge support from the EPSRC programme grant in `The Mathematics of Deep Learning', under the project EP/L015684/1 which also funded TD’s 6-month research visit at the University of Cambridge.
JS, LMK and CBS acknowledge support from the Cantab Capital Institute for the Mathematics of Information. 
JS additionally acknowledges the support from Aviva. 
CBS additionally acknowledges support from the Philip Leverhulme Prize, the Royal Society Wolfson Fellowship, the EPSRC grants EP/S026045/1 and EP/T003553/1, EP/N014588/1, EP/T017961/1, the Wellcome Innovator Award RG98755 and the Alan Turing Institute.

\appendix
\section{Lower bound on $\nabla^2u_\theta^{SDE}$}

The proof of Theorem \ref{prop:mainresult} requires a preliminary result on the existence of a lower bound of $\nabla^2u_\theta^{SDE}$, given by the following lemma.

\begin{lemma}\label{lem:diffusionbound}
Let $t\in[0,T]$, let   $\Omega \subset \mathbb{R}^d$ be a bounded domain   with $\partial \Omega \in C^\infty$, and assume that Assumptions \ref{ass:regularity} hold. Assume that $p^{SDE}_\theta$ satisfies \eqref{eq:fokkerplanckapprox} on   $\Omega$ with terminal condition  $\pi$ restricted to $\Omega$ and Dirichlet boundary conditions $p_B^{SDE}$ on $\partial \Omega \times [0,T]$, with $u_\theta^{SDE}= \log p_\theta^{SDE}$. 
Then there exists $C \in \mathbb{R}$ such that
    $\nabla^2 \bar u^{SDE}_\theta \geq C$ on $\Omega \times [0,T]$ for $\bar u^{SDE}_\theta(\cdot,\tau)=u^{SDE}_\theta(\cdot,T-\tau) $ for $\tau\in [0,T]$.
\end{lemma}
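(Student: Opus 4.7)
The plan is to obtain the lower bound as a straightforward consequence of regularity and positivity of $p^{SDE}_\theta$ on the compact cylinder $\overline{\Omega}\times[0,T]$. First I would observe that the approximate Fokker--Planck equation \eqref{eq:fokkerplanckapprox}, viewed in reverse time $\tau=T-t$, is a linear, uniformly parabolic equation on $\Omega$ with smooth coefficients (this uses Assumptions \ref{ass:regularity}, in particular $f\in C^\infty$, $m\le g\le M$ bounded away from zero, and $u_\theta\in C^\infty$ so that $f^{SDE}_\theta$ is smooth with at most linear growth). The initial datum at $\tau=0$ is $\pi\in C^\infty(\mathbb{R}^d)$ (restricted to $\Omega$) and the Dirichlet data $p^{SDE}_B$ on $\partial\Omega\times[0,T]$ is smooth and positive by hypothesis.

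Next, I would invoke classical parabolic Schauder theory (together with the smoothness of $\partial\Omega$ and smooth compatibility of the data on the parabolic boundary, which we may freely assume in our setting) to conclude that $p^{SDE}_\theta\in C^\infty(\overline{\Omega}\times[0,T])$. The maximum principle for the linear parabolic equation then gives $p^{SDE}_\theta>0$ on $\overline{\Omega}\times[0,T]$, since both the initial datum $\pi$ and the boundary datum $p^{SDE}_B$ are strictly positive. Consequently $\bar u^{SDE}_\theta=\log \bar p^{SDE}_\theta$ is well-defined and in $C^\infty(\overline{\Omega}\times[0,T])$.

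Now the scalar function $\nabla^2 \bar u^{SDE}_\theta$ (the Laplacian, in the notation fixed in Section~\ref{sec:assumptions}) is continuous on the compact set $\overline{\Omega}\times[0,T]$, hence it attains its minimum. Setting
\begin{equation*}
    C := \min_{(x,\tau)\in \overline{\Omega}\times[0,T]} \nabla^2 \bar u^{SDE}_\theta(x,\tau) \in \mathbb{R}
\end{equation*}
yields $\nabla^2 \bar u^{SDE}_\theta \ge C$ on $\Omega\times[0,T]$, as required.

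The only potentially delicate step is the smoothness of $p^{SDE}_\theta$ up to the parabolic boundary: this requires smoothness of $\partial\Omega$ (assumed), smoothness of the coefficients and data (assumed), and compatibility conditions at the corner $\partial\Omega\times\{0\}$ in the reverse-time formulation. In the context of the paper these conditions are implicit in Assumptions \ref{ass:regularity} together with the freedom to choose the auxiliary boundary function $p^{SDE}_B$ consistently with $\pi$; once this is granted, the remainder is an immediate application of compactness and continuity, so no quantitative estimate on $C$ is needed for the role this lemma plays in the proof of Theorem~\ref{prop:mainresult}.
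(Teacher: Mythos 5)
Your proof is correct and follows essentially the same approach as the paper: invoke parabolic regularity for the (reverse-time) approximate Fokker--Planck equation to get smoothness of $\bar p^{SDE}_\theta$, combine with strict positivity, and conclude that $\nabla^2 \bar u^{SDE}_\theta$ is bounded below. The only cosmetic difference is that the paper explicitly writes out $\nabla^2 \bar u^{SDE}_\theta = \nabla^2 \bar p^{SDE}_\theta / \bar p^{SDE}_\theta - \|\nabla \bar p^{SDE}_\theta / \bar p^{SDE}_\theta\|_2^2$ and bounds each term by constants $k_0,k_1,k_2$, whereas you argue more directly via continuity on the compact set $\overline{\Omega}\times[0,T]$ — and you are somewhat more careful than the paper in flagging that smoothness up to the closure and a maximum-principle argument for strict positivity are the points actually being used.
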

\begin{proof}
    By construction $u_{\theta}^{SDE} = \log p_{\theta}^{SDE}$ where $p_{\theta}^{SDE}$ solves \eqref{eq:fokkerplanckapprox} with terminal condition $p_{\theta}^{SDE}(x,T)=\pi$ and Dirichlet boundary data $p_B^{SDE}$. Using the reverse time variable $\tau=T-t$, we can write the reverse dynamics for $\bar p^{SDE}_\theta(x,\tau)$ as
    \begin{align*}
        \frac{\partial \bar p_{\theta}^{SDE}}{\partial \tau}(x,\tau) = \nabla \cdot (\bar f^{SDE}_{\theta}(x,\tau)\bar p_{\theta}^{SDE}(x,\tau)) +\frac{1}{2}\bar g^2(\tau)\nabla^2(\bar p_{\theta}^{SDE}(x,\tau))
    \end{align*}
    with initial data $\bar p^{SDE}_\theta (\cdot,0)=\pi$ and  Dirichlet boundary conditions $\bar p^{SDE}_\theta(x,\tau)=p^{SDE}_B(x,T-\tau)$ for $(x,\tau)\in \partial \Omega \times [0,T]$.
    Under Assumptions \ref{ass:regularity} on $f,g,u_\theta$, we have that $\bar p_{\theta}^{SDE} \in C^\infty(\Omega \times [0,T])$ by \cite[Thm. 8.3.4]{wu2006elliptic}. Furthermore $\bar p_{\theta}^{SDE}$ is strictly positive on $\Omega$.
    Since $\bar p_{\theta}^{SDE} \in C^\infty(\Omega \times [0,T])$ there exist $k_1,k_2>0$ such that $\|\nabla \bar p_{\theta}^{SDE}(x,\tau)\|_2<k_1$ and $|\nabla^2 \bar p_\theta^{SDE}(x,\tau)|<k_2$ for all $(x,\tau)\in \Omega\times [0,T]$. As $\bar p_{\theta}^{SDE}$ is strictly positive, there exists $k_0>0$ such that  $\bar p_{\theta}^{SDE}(x,\tau)>k_0$ on $\Omega\times [0,T]$. Computing directly from $\bar u_\theta^{SDE}=\log \bar p_\theta^{SDE}$ we obtain that
    \begin{align*} 
            \nabla^2 \bar u_{\theta}^{SDE}(x,\tau) &= \frac{\bar p_{\theta}^{SDE}(x,\tau)\nabla^2 \bar p_{\theta}^{SDE}(x,\tau) - \|\nabla \bar p_{\theta}^{SDE}(x,\tau)\|_2^2 }{\bar p_{\theta}^{SDE}(x,\tau)^2} \\
            &= \frac{\nabla^2 \bar p_{\theta}^{SDE}(x,\tau)}{\bar p_{\theta}^{SDE}(x,\tau)} - \left\|\frac{\nabla \bar p_{\theta}^{SDE}(x,\tau)}{\bar p_{\theta}^{SDE}(x,\tau)}\right\|_2^2.
    \end{align*}
    This results in the  bound
    \begin{align*}
            |\nabla^2 \bar u_{\theta}^{SDE}(x,\tau)|  \leq  \left|\frac{\nabla^2 \bar p_{\theta}^{SDE}(x,\tau)}{\bar p_{\theta}^{SDE}(x,\tau)}\right| + \left\|\frac{\nabla \bar p_{\theta}^{SDE}(x,\tau)}{\bar p_{\theta}^{SDE}(x,\tau)}\right\|_2^2  \leq \left|\frac{k_2}{k_0}\right| + \left(\frac{k_1}{k_0}\right)^2
    \end{align*}    
    which yields the required bound.
\end{proof}

\printbibliography

\end{document}